\documentclass[12pt,letterpaper]{article}
\usepackage[margin=1.2in]{geometry}

\usepackage{epsfig, amsthm, amsmath, amssymb, latexsym, xpatch, xcolor, cancel}
\usepackage[titletoc]{appendix}
\usepackage{float} %to make figure exactly here
\usepackage[margin=0.5cm, font={small,it}]{caption}
\usepackage{wrapfig}
\usepackage[normalem]{ulem} % strike out % for better underline, use \uline{...}

\usepackage[mathscr]{euscript} % for \mathscr{...}

\usepackage{graphicx}
\usepackage{url}
\usepackage[hidelinks]{hyperref}

\theoremstyle{plain}
\newtheorem{theorem}{Theorem}%[section]
\newtheorem*{theorem*}{Theorem}%[section]
\newtheorem{proposition}{Proposition}
\newtheorem{lemma}{Lemma}
\newtheorem*{lemma*}{Lemma}

\theoremstyle{definition}
\newtheorem{definition}{Definition}
\newtheorem*{definition*}{Definition}

\newtheorem*{example*}{Example}
\newtheorem*{remark*}{Remark}

\newcommand{\op}{\begin{itemize}}
\newcommand{\ed}{\end{itemize}}

\newcommand{\opp}{\begin{quote}}
\newcommand{\edd}{\end{quote}}

\newcommand{\ope}{\begin{enumerate}}
\newcommand{\ede}{\end{enumerate}}

\newcommand{\im}{\item}

\newcommand{\PP}{\mathbb{P}}

\newcommand{\eos}{\mathtt{eos}}
\newcommand{\dTV}{d_{\textsc{tv}}}
\newcommand{\Pcal}{\mathcal{P}}
\newcommand{\Qcal}{\mathcal{Q}}
\newcommand{\Scal}{\mathcal{S}}
\newcommand{\X}{\mathcal{X}}
\newcommand{\Rhat}{\widehat{R}}
\newcommand{\rhohat}{\hat{\rho}}
\newcommand{\Error}{\mathsf{Error}}

\title{Cross-Entropy Risk Estimation for Language Models: Inconsistency Must Be Dense, \\and the Holdout Method Is No Exception}

\author{Hanti Lin \\[0.5em] University of California, Davis \\ika@ucdavis.edu}

\date{}

\begin{document}

\maketitle

\begin{abstract} \noindent
Language models are compared by their held-out per-token cross-entropy risk---the quantity scaling laws are fitted to. We show that it cannot be consistently estimated. Consistency, or convergence to the estimand, is defined relative to a \emph{possible state of the world}: a pair consisting of a data-generating distribution and a model we turn out to train. Quantifying over models as well as data-generating mechanisms is essential, because what decides whether a model's risk is estimable is a tail property of the distribution its weights induce, which no sample reveals. The per-token cross-entropy risk is hard to estimate because of a topological fact: among the possible states, finite risk and infinite risk each lie arbitrarily close to every instance of the other. Consequently no estimator---not merely the holdout average---is consistent at every state at which the risk is defined. Worse, inconsistent estimation persists under both bounding the expected sequence length and restricting to full-support models; and in that restricted setting the states at which inconsistency occurs are even dense. Two interesting ways out are identified, and neither is free. Way out 1: using a bounded context window, we can floor a model's next-token probabilities, making its risk finite exactly when the data-generating distribution has finite expected sequence length---a new, statistical rationale for a choice that was made on computational grounds, though the assumption it substitutes is itself beyond the reach of any test. Way out 2: reporting the risk only when it falls below a threshold fixed in advance restores consistency, at no cost to what model selection actually requires---but we need to recognize that the goal of estimation is revised.
\end{abstract}

%\newpage
%\addcontentsline{toc}{section}{Table of Contents}
%\tableofcontents

\section{Introduction}

Language models are compared by held-out loss. Here is the idea: set aside some text that the model was not trained on, compute the average negative log probability it assigns to the tokens of that text, and read the result as an estimate of the model's predictive risk, called {\em per-token cross-entropy} risk. The number so obtained does a great deal of work: (i) it decides which checkpoint ships, (ii) it is reported as a summary of model quality, and (iii) it is the quantity scaling laws are fitted to---the loss $L$ of Kaplan et al. (2020) is the cross entropy averaged over tokens, and the compute-optimal law of Hoffmann et al. (2022) is fitted to the same quantity. This practice is backed by an argument every practitioner knows: the holdout average is an unbiased estimator of the population risk, and by the law of large numbers it converges to that risk as the test set grows---or so the argument goes.

But that argument has a gap. The risk need not be finite. Because the space of token sequences is countably infinite and a softmax-parameterized language model assigns positive probability to every sequence in it, the risk is an infinite sum of nonnegative terms that may well diverge---and an infinite per-token risk is not a remote possibility confined to contrived cases. A language model $q$ assigns a probability to each complete token sequence and so is itself a distribution on the same space as the data-generating distribution $p$; probability mass can therefore be moved in either coordinate. And it takes very little movement. Given any pair $(p,q)$ at which the risk is finite, an arbitrarily small movement of mass makes it infinite; and given any pair at which it is infinite, an equally small movement makes it finite. So, of the two statistical hypotheses---finite risk and infinite risk---each lies arbitrarily close to every instance of the other, as Section~\ref{sec-setting} makes precise. So, the estimation problem includes the subproblem of whether the estimand is even a finite number---a hypothesis testing problem.

Addressing that problem precisely requires a refinement of what estimation success means. The estimand depends on two arguments, the unknown data-generating distribution $p$ and the model $q$ being evaluated. So, estimation consistency---namely, stochastic convergence to the true estimand---is naturally a property of an estimator at a \emph{pair} $(p, q)$, which is a {\em state of the world} in which the data generation is one thing and the model we happen to have trained is another. Defining pointwise consistency by quantifying over models $q$ as well as the data-generating mechanism $p$ is important. A holdout procedure is a methodology applied across many models, and, more to the point, possessing a trained model $q$ is not knowing it: what decides whether its risk is finite or infinite depends in part on some mathematical facts about it, namely, how fast its loss grows with sequence length, which is a tail property of the distribution induced by the weights of the model---but this distribution is neither readable from the weights nor recoverable from a sample. We have limited mathematical knowledge about the distribution induced by the trained model---about the exact identity of the trained model.

Our main results say that no estimation procedure settles the question, and they say it three times over, each time on a smaller class of possibilities. Theorem~\ref{thm-first} says that no estimator is consistent at every state at which the estimand exists, be it finite or infinite. Theorem~\ref{thm-second} says that this survives two natural restrictions at once---bounding the expected sequence length by a constant fixed in advance, and confining attention to models of full support (a consequence of the common practice of softmax normalization). Theorem~\ref{thm-third} says that on that same restricted class the failures of consistency are not isolated but \emph{dense}: no matter which state one takes as a working hypothesis, and no matter how small a neighborhood of it one is willing to assume the truth lies in, that neighborhood contains a state at which the estimator fails. No assumption certifiable by looking at a neighborhood is strong enough to rescue consistency. The proof proceeds by a reduction: any consistent estimator would in particular have to decide whether the risk is finite, and no statistical test can decide that. 

The foregoing summarizes the paper's first two contributions, both belonging to the impossibility itself---its content, and the conceptual refinement that makes it statable.
\begin{itemize}
\im \textbf{An impossibility theorem} for the estimation of per-token cross-entropy risk, holding against every estimator whatever---not merely the holdout average---and surviving the natural restrictions just described (Section~\ref{sec-theorems}).

\im \textbf{A refinement of pointwise consistency} to possible states of the world $(p,q)$, which we argue captures the spirit of the standard notion better than the usual quantification over data-generating distributions alone (Section~\ref{sec-whyq}).
\end{itemize}

The value of an impossibility result, however, lies less in the door it closes than in pointing us to the doors we may open. Taking the theorems not as a verdict but as a constraint, and asking what must be given up to work around it, yields two ways out---and each is a positive result in its own right.
\begin{itemize}
\im \textbf{Way out 1: A purely statistical rationale for the context window.} A model with a bounded context window over a finite vocabulary realizes only finitely many distinct next-token distributions, hence assigns every token a probability bounded below by some $\beta > 0$ as a ``floor'', hence has a loss growing only linearly in sequence length; and for such a model the per-token risk exists and is finite exactly when the data has finite expected sequence length. Interesting, the context window was adopted for computational reasons---attention costs quadratic time in it---but now it can be given a statistical rationale as well: as a way out of a statistical impossibility result. The two rationals, computational and statistical, are wholly independent of one another, and complement each other (Section~\ref{sec-floor}).

\im \textbf{Way out 2: An alternative estimand that is good enough.} If what needs to be known is just the true risk when it falls below a threshold $r$ fixed in advance, and otherwise we only need to know that it is at least $r$ (be it finite or infinite), then this screened estimand can be consistently estimated at every stated where the risk is defined, and nothing beyond the law of large numbers is needed to see this. The rationale is simple: a model whose risk exceeds a threshold set in advance will not be deployed to begin with, so its exact risk was never wanted (Section~\ref{sec-screened}).
\end{itemize}

The present work also points to a statistical learning theory of language-model pretraining. The algorithm actually in use is not empirical risk minimization simpliciter; it is a two-stage procedure of which minimization is only the first stage. In stage 1, many candidate models are trained---at various scales, on various data mixtures, under various hyperparameters. In stage 2, those candidates are sifted by their estimated held-out risks, and it is the survivor of the sifting that is the output. The two stages together constitute the learning algorithm, and the second of them relies on risk estimation. A statistical theory of risk estimation is therefore an important part of a learning theory for pretraining---and the two ways out above are what allow the second stage to run at all. The first supplies an estimate to sift by: a bounded context window returns, at no cost in empirical risk when the window is at least the training length, a minimizer whose per-token risk can be consistently estimated. The second supplies what the sifting actually requires, which is a verdict against a threshold rather than a number pinned down.

The paper is organized as follows. Section~\ref{sec-related} places the results among neighboring impossibility theorems. Section~\ref{sec-setting} fixes the setting and defines the per-token risk. Section~\ref{sec-main} refines consistency to states of the world and states the three theorems. Section~\ref{sec-ways} develops the two ways out, and Section~\ref{sec-more-out} sets the whole field of them against the conditions that jointly produce the impossibility, so that a reader may judge which is best given up. Proofs are collected in Appendix~\ref{sec-proofs}, and Appendix~\ref{sec-more-ways-out} takes the remaining exits one at a time.

\section{Related Work}\label{sec-related}

Impossibility results of this shape---no procedure whatsoever performs as advertised across a sufficiently rich class of distributions---descend from the work of Bahadur and Savage (1956) in statistics, who showed that no nontrivial test of a hypothesis about the mean is available when the class of distributions is too large. Donoho (1988) carried that line a step further, showing that what defeats two-sided inference need not defeat one-sided inference: a functional semicontinuous in a distribution-free metric admits lower confidence statements even where it admits no interval. Both halves of that lineage recur below---an impossibility, and a one-sided procedure that survives it.

What distinguishes the present result is the source of the obstruction. In the Bahadur--Savage setting the estimand is always a well-defined real number and the difficulty lies in the behavior of tests near the boundary between the hypotheses; here the difficulty is that the estimand may fail to be a real number at all, and the boundary that matters is the one between finite and infinite risk---a boundary each of whose sides is dense in the other's territory, so that no amount of data localizes the truth to one side of it.

Closest to the present result is the work of Antos and Kontoyiannis (2001) on estimating additive functionals of a distribution on a countable alphabet. For the entropy and for other such functionals they establish two things: the plug-in estimate is universally consistent, and no rate-of-convergence result is available for any sequence of estimators without further assumptions. The consistency half is not in tension with what follows, because it is consistency \emph{over the class of distributions of finite entropy}. That restriction is doing all the work, and the present paper is about the fact that membership in the corresponding class cannot be determined: finite-risk and infinite-risk states are each dense in the other's territory, so no test settles which class one is in. Their positive results under moment assumptions are the counterpart of the restricted-class exit examined in Appendix~\ref{sec-background}, and the rates known there---logarithmic under a second-moment assumption, and never polynomial (Wyner and Foster, 2003)---indicate what that exit buys and what it does not.

Nearer still in mechanism, though remote in subject matter, is a line of work on the hardness of conditional independence testing. Shah and Peters (2020) prove that no valid test of conditional independence has power against any alternative when the conditioning variable is continuous, and the reason is precisely the one operating here: the null hypothesis is dense, in total variation, among the alternatives, and denseness of that kind is by itself sufficient for untestability. Subsequent work has developed the point into what is now called a no-free-lunch theorem for the problem (Neykov et al., 2021; Kim et al., 2022). Our situation is the symmetric version of theirs. There one hypothesis is dense among the other; here each of the two---finite risk and infinite risk---is dense in the other's territory, so that neither can be locally ruled out. 

A third line makes the link between topology and testability explicit. Genin and Kelly (2017) solve for the unique topology on probability measures whose open sets are exactly the statistically verifiable hypotheses, and characterize solvability in the limit by a condition on locally closed sets; Ermakov (2017) gives necessary and sufficient conditions for pointwise consistent testing in terms of the null being closed and the alternative $F_\sigma$, with refinements and extensions by Boeken et al. (2026). The hypotheses at issue below have the right shape for these criteria to apply---by the lower semicontinuity of the risk, $H_0$ is a countable union of closed sublevel sets and $H_1$ the corresponding $G_\delta$. What we add here is that the two are each dense in the other, leading to dense inconsistency, rather than just failure of pointwise consistency. 

The same pattern has been found for other functionals of a distribution on a countable alphabet. Mossel and Ohannessian (2019) show that the missing mass---the probability of drawing a type not yet seen---is not distribution-free learnable, so that predicting rare events requires assuming heavy tails; there too a restriction of the class is not a convenience but a precondition. What is new here is the kind of obstruction. In those results the estimand is a finite number throughout and the difficulty is in approaching it; here the estimand may fail to be a finite number, and what cannot be determined is whether it is finite at all.

\section{Setting}\label{sec-setting}

Let $\mathcal{V}$ be a finite vocabulary containing a distinguished end-of-sequence token $\eos$, and write $\mathcal{V}_0 := \mathcal{V} \setminus \{\eos\}$, assumed throughout to be nonempty, $\mathcal{V}_0 \neq \varnothing$. The sample space $\X$ is the set of all finite token sequences terminated by $\eos$:
\[
\X \;:=\; \bigcup_{n=0}^{\infty} \mathcal{V}_0^n \times \{\eos\},
\]
whose elements have the form $x = (x_1,\ldots,x_n,\eos)$. Since sequence length is unbounded, $\X$ is countably infinite---a hypothesis essential to the main result.

Let $p$ denote the data-generating (probability) distribution on $\X$ and $q$ an autoregressive model, likewise a distribution on $\X$. Write $x_{<t} := (x_1,\ldots,x_{t-1})$ for the prefix preceding the $t$-th token, and $|x|$ for the number of tokens in $x$ including the terminal $\eos$. The autoregressive factorization of $q$ is $q(x) = \prod_{t=1}^{|x|} q(x_t \mid x_{<t})$.
The loss of a model $q$ on a sequence $x$ is the logarithmic loss $-\log q(x)$, and the population risk of $q$, equivalently the cross entropy of $p$ relative to $q$, is
\[
R(p,q) \;:=\; \mathbb{E}_{X \sim p}\big[-\log q(X)\big] \;=\; -\sum_{x \in \X} p(x)\log q(x) .
\]
The summands are nonnegative, so the sum is always well defined as an element of $[0,\infty]$.

This is also, in empirical form, the quantity models are trained to minimize.\label{sec-training} Minimizing that empirical version over a parameterized family is exactly maximum likelihood, and by the factorization above is exactly the next-token objective used in pretraining; Appendix~\ref{sec-training-detail} gives the details. What matters below is only that $q$ was fixed before the evaluation data was seen, and that the quantity to be estimated is---up to the normalization introduced next---the one that was optimized.

Note that $R(p,q)$ is a quantity per \emph{sequence}, whereas what is reported in practice is a quantity per \emph{token}: models are compared by held-out loss averaged over the tokens of the held-out text, and scaling laws are fitted to that. Write $m(p) := \mathbb{E}_p|X| \in [1,\infty]$ for the expected sequence length, the lower bound holding because every sequence contains at least the token $\eos$, so that the division below is never by zero.

\begin{definition}[\emph{Per-token risk}]\label{def-rho}
The \textbf{per-token risk} of a model $q$ under a data-generating distribution $p$ with $m(p) < \infty$ is
\[
\rho(p,q) \;:=\; \frac{R(p,q)}{m(p)} .
\]
\end{definition}

So defined, $\rho$ is exactly the expected per-token loss under the measure that pools all token positions of all sequences and draws one of them. Defining it as a quotient is a matter of brevity. Taking the autoregressive conditionals $q(x_t \mid x_{<t})$ as primitive instead, and defining the per-token risk directly as an expected next-token loss over prediction events, yields the same quantity---and yields it exactly when $m(p) < \infty$, so the restriction to distributions of finite expected length is not an artifact of the shorter presentation adopted here but the condition under which the underlying sample space exists at all. Appendix~\ref{sec-primitives} sets this out.\footnote{On the set of positions $\{(x,t) : x \in \X,\ 1 \leq t \leq |x|\}$ put $\nu(x,t) := p(x)/m(p)$. This is a probability measure, since $\sum_x \sum_{t \leq |x|} p(x)/m(p) = \sum_x p(x)|x|/m(p) = 1$, and it is flat in $t$; summing the chain rule gives $\mathbb{E}_\nu[-\log q(x_t \mid x_{<t})] = \sum_x \frac{p(x)}{m(p)}\sum_{t \leq |x|} -\log q(x_t \mid x_{<t}) = \frac{1}{m(p)}\sum_x p(x)(-\log q(x)) = \rho(p,q)$.} Its exponential is what is usually reported as perplexity.

Whether $\rho(p,q)$ is finite is not settled by any amount of care in specifying $p$ and $q$. Given any pair $(p,q)$ at which the risk is finite, one can modify $p$ or $q$ or both---moving an arbitrarily small amount of probability mass---so as to obtain a pair $(p',q')$ at which the risk is infinite; and conversely, given any pair at which the risk is infinite, an equally small modification makes it finite. Each condition thus occurs arbitrarily close to every instance of the other. Proposition~\ref{prop-dense} in Appendix~\ref{sec-lemmas} states this precisely and proves it, exhibiting an explicit pair and a recipe for perturbing any pair into either condition. The mechanism can be given in a line. A model may assign a sequence of length $l$ a probability as small as $e^{-l^2}$, making its loss on that sequence of order $l^2$; a constraint on expected length, meanwhile, meters only $l$. And sequences are abundant as well as cheap---there are $|\mathcal{V}_0|^{\,l-1}$ of them at each length $l$---so there is no shortage of places to put the mass. Mass may therefore be moved onto long sequences at a cost in expected length as small as one likes, while the contribution to the risk diverges.

The method under study is the obvious one. Given a \textbf{holdout sample} $y = (y^{(1)},\ldots,y^{(n)})$---the realization of $Y^{(1)},\ldots,Y^{(n)}$ drawn independently of one another from $p$, and independently of whatever produced $q$, the latter being what distinguishes held-out from training data---it reports total loss over total tokens:
\[
\rhohat^{\mathrm{hold}}_n(y;q) \;:=\; \frac{\sum_{j=1}^n -\log q\big(y^{(j)}\big)}{\sum_{j=1}^n \big|y^{(j)}\big|} .
\]
Each $y^{(j)}$ lies in $\X$ and is therefore a complete sequence, so $q(y^{(j)})$ and $|y^{(j)}|$ are defined and the denominator is at least $n$. Note that the procedure consults $q$, as any real evaluation procedure must.

Write $\Qcal$ for the set of all probability distributions on $\X$, considered as autoregressive models; an estimator must be prepared to receive any of them.

\begin{definition}[\emph{Estimator}]\label{def-estimator}
An \textbf{estimator} of the per-token risk is a family $\rhohat = (\rhohat_n)_{n \geq 1}$ of functions $\rhohat_n : \X^n \times \Qcal \to [0,\infty]$, written $\rhohat_n(y;q)$, the first argument being a sample of $n$ complete sequences.
\end{definition}

\begin{definition}[\emph{Estimation error}]\label{def-error}
Since both estimate and estimand may be infinite, error is measured on $[0,\infty]$: the \textbf{estimation error} committed by an estimate $u \in [0,\infty]$ of a quantity $v \in [0,\infty]$ is
\[
\Error(u,v) \;:=\;
\begin{cases}
|u - v|, & \text{if $u$ and $v$ are both finite},\\[4pt]
\infty, & \text{if exactly one of $u$ and $v$ is infinite},\\[4pt]
0, & \text{otherwise, both being infinite}.
\end{cases}
\]
\end{definition}

The second clause carries the content: a finite estimate of an infinite quantity is not merely very wrong but infinitely so, which forces an estimator succeeding where the risk is infinite to \emph{certify} the infinitude by returning $\infty$ itself. .

The distinction drawn by Definition~\ref{def-error} is live for the holdout method itself. Suppose $\rho(p,q) = \infty$ with $m(p) < \infty$, and suppose further that $q$ assigns positive probability to every sequence, as a softmax model does. The strong law---valid for nonnegative i.i.d.\ summands even with infinite mean (see, e.g., Durrett, 2019, \S 2.4)---then gives $\rhohat^{\mathrm{hold}}_n \to \infty$ almost surely; yet every observed $-\log q(y^{(j)})$ is a finite number, so $\rhohat^{\mathrm{hold}}_n$ takes a finite value at every $n$ and never once reports the infinitude it converges to.

\begin{definition}[\emph{Error at a state of the world}]
A \textbf{state of the world}, or \textbf{state} for short, is an ordered pair $(p,q)$, where $p$ and $q$ are both distributions on $\X$ but interpreted differently: the pair is understood as a state of the world in which the data-generating distribution is $p$ and the autoregressive model we have trained and want to evaluate turns out to be $q$. Let $\Error(\rhohat_n; p,q)$ denote the (random) error that an estimator $\rhohat$ incurs at a state $(p, q)$ with sample size $n$, formally defined by 
\[
\Error(\rhohat_n; p,q) \;:=\; \Error\big(\rhohat_n(\,\cdot\,;q),\, \rho(p,q)\big) .
\]
\end{definition}

\begin{definition}[\emph{Consistency at a state of the world}]\label{def-consistency}
Let $\PP_{p,n}$ be the probability measure obtained by taking the $n$-fold (i.i.d.) product of $p$. An estimator $\rhohat$ is \textbf{consistent at a state} $(p,q)$ iff for every real $\epsilon > 0$,
\[
\PP_{p,n}\big(\, \Error(\rhohat_n; p,q) < \epsilon \,\big) \;\to\; 1 \qquad\text{as } n \to \infty .
\]
\end{definition}

Consistency that happens to hold at one state is not much of an achievement. A better evaluative criterion takes this form:

\begin{definition}[Pointwise consistency]\label{def-pointwise}
A \textbf{state space} is a set of states $(p,q)$ at each of which the estimand exists, that is, with $m(p) < \infty$. An estimator is \textbf{pointwise consistent} with respect to a state space $\Scal$ iff it is consistent at every state in $\Scal$.
\end{definition}

\section{Main Results}\label{sec-main}

\subsection{Three Impossibility Theorems}\label{sec-theorems}

Let $\Pcal$ denote the set of distributions on $\X$---mathematically the same set as $\Qcal$, but taken this time as data-generating distributions. One assumption on $p$ cannot be dispensed with, since without it the estimand does not exist: the expected sequence length must be finite. Accordingly, let $\Pcal_{<\infty} := \{p \in \Pcal : m(p) < \infty\}$, so that $\Pcal_{<\infty} \times \Qcal$ is the largest state space on which $\rho$ is everywhere defined. Pointwise consistency with respect to so large a state space is what one would like. It is not to be had:

\begin{theorem}\label{thm-first}
Every estimator of the per-token risk fails to achieve pointwise consistency with respect to the state space $\Pcal_{<\infty} \times \Qcal$.
\end{theorem}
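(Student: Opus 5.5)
We argue by contradiction, reducing consistent estimation to a statistical test that cannot exist. Suppose $\rhohat$ were consistent at every state in $\Pcal_{<\infty}\times\Qcal$. The key step is to fix a single model $q$ and two sequences of data distributions built from one base distribution. We then show that a consistent estimator would have to tell apart two laws of the sample that are essentially indistinguishable in total variation.

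\textbf{Construction.} Fix a full-support $q$ that assigns very small probability to long sequences, for instance $q(x)\propto e^{-|x|^2}$ normalized over $\X$, so that $-\log q(x)\asymp |x|^2$. Take $p_0$ with $m(p_0)<\infty$ and $R(p_0,q)<\infty$; a geometric-length distribution works. For each $k$, form $p_k := (1-\delta_k)p_0+\delta_k\pi_k$, where $\pi_k$ puts mass on sequences of lengths $l$ with a tail $\propto l^{-3}$ beyond level $k$. Then $\sum l\cdot l^{-3}<\infty$ keeps $m(p_k)<\infty$, while $\sum l^2\cdot l^{-3}=\infty$ makes $\rho(p_k,q)=\infty$. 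This is the mechanism described after Definition~\ref{def-rho}, and Proposition~\ref{prop-dense} may be invoked directly to supply such perturbations with $\dTV(p_k,p_0)\le\delta_k$ arbitrarily small.

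\textbf{Diagonal argument.} Consistency at $(p_0,q)$ gives an $n_1$ with $\PP_{p_0,n_1}(\rhohat_{n_1}<\rho(p_0,q)+1)>3/4$, this event being an event about a finite estimate. Choose $\delta_1$ so small that $\dTV(\PP_{p_1,n_1},\PP_{p_0,n_1})\le n_1\delta_1<1/4$. Then $\PP_{p_1,n_1}(\rhohat_{n_1}<\infty)>1/2$. But consistency at $(p_1,q)$, where $\rho=\infty$, requires $\PP_{p_1,n}(\rhohat_n=\infty)\to1$, because any finite output has infinite $\Error$. This alone does not contradict, since $n_1$ is fixed, so we iterate. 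We build a single $p^\ast=\sum_k$-mixture, or a limit of nested perturbations $p_0\to p_1\to p_2\to\cdots$. At stage $k$ the current distribution is infinite-risk, consistency forces $\rhohat_{n_k}=\infty$ with high probability there, and a further tiny perturbation (again via Proposition~\ref{prop-dense}) returns the risk to finite. The stages alternate finite and infinite, with perturbation sizes $\delta_k$ chosen after $n_k$ so that $\sum_{j>k} n_k\delta_j$ is small. The limit $p^\ast$ still has $m(p^\ast)<\infty$, and its risk is either finite or infinite. At infinitely many sample sizes $n_k$, however, the sample law under $p^\ast$ is within $1/4$ in total variation of a state where the estimator outputs the wrong kind of value (finite vs.\ $\infty$) with probability $>3/4$. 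So $\rhohat$ fails consistency at $(p^\ast,q)$, which is a contradiction.

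\textbf{Main obstacle.} The delicate step is the limit. We must ensure that $p^\ast$ lies in $\Pcal_{<\infty}$ and that its risk status is controlled, even though we alternate between finite and infinite $R$. The plan is to make every perturbation change $m$ by at most $2^{-k}$, so that $m(p^\ast)<\infty$ by summability. Each $p_k$ should differ from $p_{k-1}$ only on sequences longer than every length used earlier. Then whichever status $p^\ast$ has, one parity of the stages disagrees with it, and that parity yields the contradiction. If controlling the limit proves messy, the fallback is simpler: we need only a \emph{single} failing state. Fix the finite-risk $p_0$ and note that the map $p\mapsto \PP_{p,n}$ is continuous in total variation for each fixed $n$. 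Whether the resulting argument gives failure at some state depends on the same iterated construction, so the nested-perturbation limit is the core.
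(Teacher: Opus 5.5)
Your proposal is correct and is essentially the paper's own argument. You reduce to the finite-versus-$\infty$ test induced by the estimator at a fixed witness $q\propto e^{-|x|^2}$, build alternating finite- and infinite-risk perturbations whose sizes shrink against $\dTV(\PP_{p,n},\PP_{p',n})\le n\,\dTV(p,p')$ under an expected-length budget, and let the limit $p^\ast$ defeat whichever verdict it deserves. The paper uses the same construction but runs it inside an arbitrary ball of $\Pcal_{<L}\times\Qcal_{>0}$ to obtain dense inconsistency (Theorem~\ref{thm-third}), and then reaches this theorem by inclusion of state spaces, whereas you aim directly at one failing state.

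Two points need tightening:
\begin{enumerate}
\item Summability only bounds $m(p_k)$ uniformly. Carrying that bound to $p^\ast$ needs lower semicontinuity of $m$ under $\dTV$ (Fatou, as in Lemma~\ref{lem-lsc}), because $\Pcal_{<\infty}$ is not closed.
\item The budget must be enforced by your explicit tail-mixture step together with truncations that move the removed tail onto a shortest sequence. Proposition~\ref{prop-dense} cannot do this, since it neither keeps $q$ fixed nor bounds $m$. Your plan to alter only ever-longer sequences is unnecessary, and it cannot be kept at the return-to-finite steps, where the removed tail mass has to land on short sequences.
\end{enumerate}
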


One natural reaction is to look for plausible assumptions to restrict the state space $\Scal$---a common strategy in statistics. First, perhaps we are happy to assume, not just that the expected sequence length is finite, but that it is less than a fixed threshold $L$. Accordingly, for $L > 1$, let 
\[
\Pcal_{<L} \;:=\; \{p \in \Pcal : m(p) < L\}.
\]  

In addition to making a plausible assumption to rule out some $p$'s, we can also rule out some $q$'s:

\begin{definition}[\emph{Full support}]\label{def-support}
An autoregressive model $q$ on $\X$ is said to have \textbf{full support} iff, for every prefix $x_{<t}$ and every token $v \in \mathcal{V}$ (including $\eos$), we have $q(v \mid x_{<t}) > 0$---and, by the autoregressive factorization, this transfers to complete sequences: $q(x) > 0$ for every $x \in \X$.
\end{definition}

This is a description of standard practice rather than a convenience. A softmax over the whole vocabulary makes each conditional probability proportional to $e^{z}$ for a real logit $z$, and $e^{z}$ is never zero; so any model built on a softmax has full support, however unfavorable its logits. Accordingly, let 
\[
\Qcal_{>0} \;:=\; \{q \in \Qcal : q(x) > 0 \text{ for all } x \in \X\}.
\]  

Unfortunately, those assumptions, albeit plausible, do not restrict the state space enough to make it possible to achieve pointwise consistency:

\begin{theorem}\label{thm-second}
Let $L > 1$. Every estimator of the per-token risk fails to achieve pointwise consistency with respect to the smaller state space $\Pcal_{<L} \times \Qcal_{>0}$.
\end{theorem}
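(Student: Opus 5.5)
Since any estimator consistent at every state of $\Pcal_{<L}\times\Qcal_{>0}$ must in particular separate finite from infinite risk, we reduce to a testing impossibility. Fix one full-support model $q\in\Qcal_{>0}$ whose per-sequence loss grows superlinearly in length, say $-\log q(x)\approx |x|^2$ for the long sequences, while short sequences keep moderate probability. Concretely, we put next-token conditionals so that the $\eos$ probability at depth $t$ is $1/2$, while the continuation tokens are split so that a sequence of length $l$ has probability of order $e^{-c l^2}$ on a designated ``thin'' family and ordinary order on the rest. All conditionals stay positive, so full support is automatic. We then keep $q$ fixed and perturb only $p$. This is convenient because the estimator sees $q$ as an argument, so holding it fixed removes any information the estimator could extract from the model.

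Next we build a base $p_0$ with $m(p_0)<L$ and $R(p_0,q)<\infty$: for instance, $p_0$ is concentrated on short sequences, with $m(p_0)$ close to $1$. We then build a sequence $p_k$ with $m(p_k)<L$, $\rho(p_k,q)=\infty$, and $\dTV(p_k,p_0)\to0$. We take $p_k=(1-\delta_k)p_0+\delta_k s_k$, where $s_k$ puts mass $\propto 1/l^{3}$ on thin sequences of length $l\ge k$. The expected length contributed is then $\delta_k\sum l\cdot l^{-3}<\infty$, which is small, so $m(p_k)<L$. The risk contribution is $\delta_k\sum l^2\cdot l^{-3}=\infty$. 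This is Proposition~\ref{prop-dense} specialized to the restricted class, and the main care is that the perturbation preserves both $m<L$ (strictly) and $q$'s full support, which the construction gives.

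Now suppose $\rhohat$ is consistent at every state in $\Pcal_{<L}\times\Qcal_{>0}$. At $(p_0,q)$ the estimate must eventually be finite and near $\rho_0:=\rho(p_0,q)$, with probability at least $1-\eta$ beyond some $n_0$; call this event $A_n=\{\rhohat_n(\cdot;q)<\rho_0+1\}$. At $(p_k,q)$, consistency in the error metric of Definition~\ref{def-error} forces $\PP_{p_k,n}(\rhohat_n=\infty)\to1$, hence $\PP_{p_k,n}(A_n)\to0$. Fix $n\ge n_0$. The product measures satisfy $\dTV(p_k^{\otimes n},p_0^{\otimes n})\le n\,\dTV(p_k,p_0)\to0$ as $k\to\infty$, so $\PP_{p_k,n}(A_n)\ge1-2\eta$ for large $k$ at that fixed $n$.

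The step I expect to be the obstacle is that this alone does not contradict consistency at each individual $p_k$: pointwise consistency lets the required sample size grow with $k$. To get a genuine contradiction, I will use a single state rather than a sequence. Following the diagonal argument in the Bahadur--Savage style, I will choose $\delta_k$ and $n_k$ inductively, with the $\delta_k$ summable and decreasing fast, and define one mixture $p^\ast$ that alternates in being indistinguishable, at sample size $n_k$, from a finite-risk state. An alternative is to build a sequence $p_k$ of finite-risk states converging to an infinite-risk $p_\infty$, with each $p_k$ agreeing with $p_\infty$ up to a tail of mass $\epsilon_k$. At stage $k$ we pick $n_k$ so that the estimator is within $1$ of the finite value $\rho(p_k,q)$ with probability $\ge 3/4$. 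We then make the next tail so small that $n_k\,\epsilon_{k+1}<1/4$, so these events persist under $p_\infty$. Hence $\PP_{p_\infty,n_k}(\rhohat_{n_k}<\infty)\ge1/2$ for infinitely many $n$, contradicting consistency at $(p_\infty,q)$, where the risk is infinite. This forward-tail construction, with $p_\infty=\lim p_k$ satisfying $m(p_\infty)<L$ by summability and $R=\infty$ by the $l^2$ losses, is the version I would write out.
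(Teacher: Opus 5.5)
Your final ``forward-tail'' argument is correct, and it takes a genuinely different route from the paper's. The paper proves Theorem~\ref{thm-third} and reads off Theorem~\ref{thm-second}. Its engine (Proposition~\ref{prop-relative}, combined with Lemma~\ref{lem-reduction}) is an \emph{alternating} construction: it hops $H_0\to H_1\to H_0\to\cdots$ and forces the induced test to opposite verdicts at sample sizes $n^{(0)}<n^{(1)}<\cdots$ on nested balls. The limit $p^*$ is then a point where the test vacillates. The paper never determines whether $R(p^*,q)$ is finite, and it needs the budget $\eta_0$ together with Lemma~\ref{lem-lsc} to keep $p^*$ inside $\Pcal_{<L}$.

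Yours is the one-sided Bahadur--Savage diagonalization. You visit only finite-risk states, force only the verdict ``finite'', and build the limit so that it is \emph{known} to have infinite risk. The contradiction is then immediate, and $m(p_\infty)$ is computed directly rather than by semicontinuity. What your route buys is simplicity plus extra information: the failure can always be placed at an infinite-risk state where the estimator keeps returning finite numbers. Run from a finitely supported truncation of an arbitrary $p^\circ$, it also localizes and so gives density. What the paper's route buys is generality: it uses nothing about $H_0,H_1$ beyond each being dense in the other, which is why its unbudgeted variant carries over verbatim to Proposition~\ref{prop-untestable}.

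Three details need care when you write it out.

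\emph{First}, $p_\infty$ cannot be fixed in advance and then truncated, because $n_k$ may be huge compared with any tail fixed beforehand. The construction must be additive. Obtain $p_k$ from a finitely supported $p_0$ by moving masses $\mu_1,\ldots,\mu_k$ off a positive-mass coordinate $x_1$ onto single long sequences, choosing $\mu_{k+1}$ only after $n_k$, and keeping $\sum_j\mu_j\le p_0(x_1)$. Since $\dTV(p_k,p_\infty)\le\sum_{j>k}\mu_j$, the constraint must control the whole remaining tail, not just the next increment. For example, take $\mu_j\le 2^{-j}/(4n_{j-1})$ with $n_k$ strictly increasing; you need strict increase anyway to get ``infinitely many $n$''.

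\emph{Second}, because the $\mu_j$ are now dictated by the $n_k$, your earlier $1/l^3$ profile is unavailable, and the divergence must come from the lengths instead. Place $\mu_j$ on a sequence of length $l_j$ with $\mu_j(l_j^2-C)\ge 1$ and $\mu_j l_j\le 2^{-j}(L-m(p_0))/2$. This is possible once $\mu_j$ is small enough, since the admissible interval runs from about $\mu_j^{-1/2}$ up to a constant times $2^{-j}\mu_j^{-1}$.

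\emph{Third}, your concrete $q$, with $\eos$-probability $1/2$ at every depth, cannot make any sequence thin when $|\mathcal{V}_0|=1$. Simply take $q(x)\propto e^{-|x|^2}$, the distribution $g$ from Lemma~\ref{lem-witness}.
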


Worse still, the failure is not merely inconsistency at a single state of $\Pcal_{<L} \times \Qcal_{>0}$---the inconsistency is spread all over that state space, in a topological sense. To state this precisely, metrize $\Pcal$ by the total variation distance
\[
\dTV(p, p') \;:=\; \sup_{A \subseteq \X} \big| P(A) - P'(A) \big| ,
\]
where $P,P'$ are the measures with mass functions $p,p'$, and metrize any state space by $\max\{\dTV(p,p'), \dTV(q,q')\}$. A subset $D$ of a metric space $M$ is called \emph{dense} in $M$ iff every open ball of $M$ meets $D$; equivalently, iff every point of $M$ has points of $D$ arbitrarily close to it. The following is what we want to avoid:

\begin{definition}[Dense inconsistency]\label{def-dense-incon}
Let $\Scal$ be a state space. An estimator is \textbf{densely inconsistent} with respect to $\Scal$ iff the set of states in $\Scal$ at which it is not consistent is dense in $\Scal$ under the metric just given.
\end{definition}

Note what this denies. It does not deny that an estimator is consistent on a dense set---it may well be---but that its \emph{inconsistencies} can be confined to a topologically negligible region. Equivalently, such an estimator is nowhere locally reliable: there is no ball, however small, throughout which it is consistent, and hence no assumption of the form ``the truth lies within $\epsilon$ of $p^\circ$'' that would license trusting it. Then we have:

\begin{theorem}\label{thm-third}
Let $L > 1$. Every estimator of the per-token risk is densely inconsistent with respect to the state space $\Pcal_{<L} \times \Qcal_{>0}$.
\end{theorem}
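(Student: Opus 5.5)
Fix an arbitrary estimator $\rhohat$, a state $(p^\circ,q^\circ)\in\Pcal_{<L}\times\Qcal_{>0}$, and a radius $\epsilon>0$. We must produce a state in the $\epsilon$-ball around $(p^\circ,q^\circ)$, still inside $\Pcal_{<L}\times\Qcal_{>0}$, at which $\rhohat$ is inconsistent. The plan is to localize the argument behind Theorem~\ref{thm-second}: inside the ball we build a finite-risk state and a sequence of infinite-risk states converging to it in a way that is invisible to data. If $\rhohat$ were consistent at all of these states, it would have to separate finite from infinite risk, and it cannot.

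\textbf{Step 1 (a finite-risk anchor inside the ball).} Using Proposition~\ref{prop-dense}, we first perturb $(p^\circ,q^\circ)$ by at most $\epsilon/3$ to a state $(p_0,q_0)$ with $\rho(p_0,q_0)<\infty$. We will take care that $q_0$ keeps full support and that $m(p_0)<L$ still holds strictly. The latter is possible because $m(p^\circ)<L$ leaves slack, and the perturbation can be chosen to cost arbitrarily little expected length, for instance by truncating the tail of $p^\circ$ and mixing $q^\circ$ with a geometric full-support model. If $\rho(p^\circ,q^\circ)$ is already finite, we simply take $(p_0,q_0)=(p^\circ,q^\circ)$.

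\textbf{Step 2 (infinite-risk states converging to the anchor).} We fix $q:=q_0$ and, for $k\ge1$, set $p_k:=(1-\delta_k)p_0+\delta_k s_k$ with $\delta_k\to0$ and $\delta_k<\epsilon/3$. Here $s_k$ is a distribution on sequences of length at least $k$ with $\sum_x s_k(x)|x|$ small enough that $m(p_k)<L$, but with $\sum_x s_k(x)(-\log q_0(x))=\infty$. For $s_k$ we place mass $\propto 1/l^3$ on one sequence $x_l$ of each length $l\ge k$, choosing $x_l$ to be the length-$l$ sequence minimizing $q_0$. It is not automatic that this gives $-\log q_0(x_l)\gtrsim l^2$. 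Because $q_0$ sums to one over exponentially many sequences of length $l$, we only get $-\log q_0(x_l)\ge (l-1)\log|\mathcal V_0|$, which is merely linear. So we must arrange the divergence by also modifying $q$. We therefore replace $q_0$ by a full-support model $q_k$ that agrees with $q_0$ except on the chosen $x_l$, where it moves their mass down to $e^{-l^2}$-scale (the mechanism described after Definition~\ref{def-rho}). The total mass moved is at most $\sum_{l\ge k}q_0(x_l)\to0$, and the mass is redistributed proportionally elsewhere. Then $(p_k,q_k)\to(p_0,q_0)$ in the state metric, every such state lies in the ball, and each has $\rho=\infty$. We choose the sequences so that $q_k$ agrees with $q_0$ on a growing set $A_k$ with $p_0(A_k)\to1$.

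\textbf{Step 3 (indistinguishability and conclusion).} Suppose, for contradiction, that $\rhohat$ is consistent at $(p_0,q_0)$ and at every $(p_k,q_k)$. Consistency at $(p_0,q_0)$ gives an $n$ with $\PP_{p_0,n}(\rhohat_n(\cdot;q_0)<\rho_0+1)>3/4$. The obstacle, and the main difficulty, is that the estimator's second argument changes from $q_0$ to $q_k$, so closeness of $p_k$ to $p_0$ alone does not transfer this bound. Two remedies are available. The first is to avoid changing $q$ at all, using the gap between $\rho$ finite and infinite with a fixed $q_0$ chosen in Step 1 to already be very small on some sparse sequences, namely a model such as $q_0=(1-\eta)q'+\eta g$ whose probabilities on a designated sparse family $\{x_l\}$ are $\le e^{-l^2}$. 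Then Step 2 only perturbs $p$, with $p_k$ agreeing with $p_0$ except for $\delta_k$ mass. Since $\dTV(\PP_{p_k,n},\PP_{p_0,n})\le n\delta_k\to0$, we get $\PP_{p_k,n}(\rhohat_n(\cdot;q_0)<\infty)>1/2$ for large $k$. The second is a diagonal argument: we pick $k$ after $n$, and choose $k=k(n)$ along a sequence so that a single limiting state $(p_*,q_0)$, with $p_*=p_0+\sum_j\delta_j(s_j-p_0)$ where the $\delta_j$ are chosen inductively and summably, still has infinite risk. Then at each stage $n_j$ the estimator returns a finite value with probability at least $1/2$, so it is not consistent at $(p_*,q_0)$. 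The second version does not even need the contradiction hypothesis. I expect designing $q_0$ with $e^{-l^2}$-small values on a sparse family, while keeping it within $\epsilon/3$ of $q^\circ$ and full-support, together with the inductive choice of $\delta_j$, to be the main technical work. Since the point $(p^\circ,q^\circ)$ and the radius $\epsilon$ were arbitrary, density follows.
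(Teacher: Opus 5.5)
\textbf{There is a genuine gap: your Step~3 never produces a state at which $\rhohat$ is inconsistent.} Steps~1--2 are sound. The finite-risk anchor by truncation works. So does a fixed full-support $q_0$ whose loss is quadratic on a sparse family, which is in effect the paper's witness of Lemma~\ref{lem-witness}, built the same way by truncating and mixing with $g$. So do the cheap infinite-risk perturbations of $p$ with controlled expected length.

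Your first remedy shows only that, for one fixed $n$, $\rhohat_n(\cdot\,;q_0)$ is finite with probability above $1/2$ under $p_k$ for large $k$. Consistency at $(p_k,q_0)$ constrains only the limit $n\to\infty$ with $k$ fixed, so this refutes uniform consistency on the ball, not pointwise consistency.

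Your diagonal remedy has the same defect in hidden form. The only source of ``finite'' verdicts you invoke is consistency at the single state $(p_0,q_0)$. By Lemma~\ref{lem-continuous} that transfers only to distributions within $O(1/n)$ of $p_0$, whereas your $p_*=p_0+\sum_j\delta_j(s_j-p_0)$ sits at a fixed positive distance of order $\sum_j\delta_j$ from $p_0$. So nothing forces a finite verdict at $p_*$ for large $n_j$.

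In fact no argument that uses consistency at $(p_0,q_0)$ alone can succeed. Let $\hat p_n$ be the empirical distribution, and let $\tau_n\to 0$ slowly enough that $\PP_{p_0,n}(\dTV(\hat p_n,p_0)\le\tau_n)\to1$. The estimator that returns the holdout average when $\dTV(\hat p_n,p_0)\le\tau_n$ and $\infty$ otherwise is consistent at $(p_0,q_0)$. It is also consistent at every infinite-risk $(p,q_0)$ with $p\neq p_0$; its failures lie at the other finite-risk states. For the same reason, your remark that the diagonal version needs no contradiction hypothesis is false: the constant-$\infty$ estimator never returns a finite value at all.

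\textbf{The missing idea is to use a fresh finite-risk point at every stage.} At each stage you apply the reductio hypothesis (consistency at every state of the ball) at that new point. This yields a new, larger sample size, whose verdict you then protect by shrinking to a ball of radius $O(1/n)$ before the next perturbation.

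That is exactly Proposition~\ref{prop-relative}. It alternates Lemma~\ref{lem-h0-fin} and Lemma~\ref{lem-h1-fin}, the latter with a halving expected-length budget. It forces $a_{n^{(k)}}>0.9$ at even stages and $<0.1$ at odd stages throughout nested closed balls. It then passes to the $\dTV$-limit $p^*$, which stays in $\Pcal_{<L}$ by Lemma~\ref{lem-lsc}. Since $a_n(p^*)$ oscillates, the induced test, and hence $\rhohat$ via Lemma~\ref{lem-reduction}, is inconsistent at $(p^*,q)$ whichever hypothesis $p^*$ satisfies.

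A one-sided repair closer to your plan also works:
\begin{itemize}
\item take finite-risk anchors $f_j$, with $n_j$ supplied by consistency at $(f_j,q_0)$;
\item let each next perturbation put mass $\delta_{j+1}\ll 1/n_j$ on, say, a single sequence of length about $\delta_{j+1}^{-1/2}$, whose loss under the witness is of order $1/\delta_{j+1}$;
\item the limit then has infinite risk yet receives finite verdicts with probability above $1/2$ at every $n_j$.
\end{itemize}
In that version you must verify the limit's risk by hand, which the paper's two-sided alternation makes unnecessary.
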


The proofs are in Appendix~\ref{sec-proofs}, and only the last of the three requires work. Theorem~\ref{thm-third} implies Theorem~\ref{thm-second}, since a dense set of failures is in particular a nonempty one; and Theorem~\ref{thm-second} implies Theorem~\ref{thm-first}, since a state of $\Pcal_{<L} \times \Qcal_{>0}$ is a state of $\Pcal_{<\infty} \times \Qcal$, the two restrictions having only narrowed the space. We nonetheless state them in the reverse order, because the widest state space is where the question naturally arises---it is the largest one on which the estimand is defined at all---and because each restriction is worth watching fail in turn.

\subsection{More on Why the Model Is Quantified Over}\label{sec-whyq}

We have defined pointwise consistency by quantifying over possible states $(p, q)$ (Definition~\ref{def-pointwise}), and this departs from the textbook notion of pointwise consistency, which quantifies over the possible data-generating distributions $p$ alone. Since the second coordinate $q$ of a state is a model we ourselves produced and can inspect, the departure needs defending. Two considerations demand it.

The first is applicability. The holdout method is a \emph{methodology}---one procedure applied uniformly across many models---and what one wants of a methodology is soundness across a range of states, not across a range of $p$ for one $q$ fixed in advance. Scaling laws are the sharpest case, fitting a curve through the held-out per-token risks of a whole family of models; but a guarantee that holds at each model separately, and silently depends on which model was drawn, is not a guarantee about the method.

The second is that possessing the model is not knowing it. One might think the $q$ argument superfluous on the ground that the model is not unknown---we trained it, we hold every weight. But what decides whether its risk can be estimated, as the theorems above and Section~\ref{sec-floor} together show, is how fast $-\log q(x)$ grows with $|x|$: linearly, and estimation succeeds; faster, and it fails. That is a {\em tail property} of the distribution the weights induce over sequences, and it is not something one reads off the weights. Nor does the evaluation recover it, for a holdout procedure evaluates $q$ at the sequences it observes and nowhere else: it sees the finitely many numbers $q(y^{(1)}),\ldots,q(y^{(n)})$, and learns nothing of the behavior of $q$ on the incomparably larger set of sequences the sample happened to miss---which is precisely where the governing tail behavior resides. On both counts the exact identity of $q$ as a probability distribution, in the respect that matters, is unknown---we have limited mathematical knowledge about $q$. 

To be sure, none of this is written into the definition of an estimator, which grants access to $q$ in its entirety; the negative results above therefore hold {\em even} against procedures with more access to the model than any real one enjoys. What the range over $q$ represents is not ignorance of the parameters but ignorance of mathematical facts---the distributional facts the parameters fail to reveal---so the departure from the textbook definition is a departure in the direction of that definition's own spirit, which asks for success at every possibility left open by what one is entitled to assume. That is, pointwise consistency should be defined by quantifying over the range of possibilities that reflect our limited knowledge, physical or mathematical. The ignorance is not irremediable in principle: Section~\ref{sec-floor} identifies a checkable property of a model that suffices to remove it.

\section{Two Possible Ways Out}\label{sec-ways}

An impossibility result earns its keep by showing where the possibilities lie. Two of them are developed here; the rest are surveyed in Section~\ref{sec-more-out}.

\subsection{Restricting the Models: A Floor}\label{sec-floor}

Perhaps the space $\Qcal_{>0}$ of autoregressive models with full support can be plausibly narrowed down further. 

\begin{definition}[\emph{Floor}]\label{def-floor}
A model $q$ has a \textbf{floor} iff $\beta := \inf\{q(v \mid x_{<t}) : v \in \mathcal{V},\ x_{<t} \text{ a prefix}\} > 0$.
\end{definition}

Suppose $q$ has a floor $\beta$. Then every factor of the autoregressive factorization is at least $\beta$, so
\[
-\log q(x) \;=\; \sum_{t=1}^{|x|} -\log q(x_t \mid x_{<t}) \;\leq\; |x| \log(1/\beta);
\]
that is, the sequence loss grows at most \emph{linearly} in length. Taking expectations under any $p \in \Pcal_{<\infty}$ gives $R(p,q) \leq m(p)\log(1/\beta) < \infty$, and hence $\rho(p,q) \leq \log(1/\beta) < \infty$. The strong law of large numbers applied separately to numerator and denominator of $\rhohat^{\mathrm{hold}}_n$ then gives convergence to $\rho(p,q)$ almost surely. On $\Pcal_{<\infty} \times \{q : q \text{ has a floor}\}$, the holdout method is pointwise consistent. The impossibility stops exactly at the floor.

A bounded context window is sufficient for a floor. Indeed, a window of $w$ tokens over a finite vocabulary imposes the following constraint by definition:
\[
q(x_t \mid x_{1:t-1}) \;=\; q(x_t \mid x_{t-w:t-1}) ,
\]
and hence there are at most $\sum_{j \leq w}|\mathcal{V}|^{\,j}$ distinct conditioning contexts, so the model realizes only finitely many distinct next-token distributions, each strictly positive by full support, and a finite set of positive numbers has a positive minimum. The prefix space is unbounded; the space of prefixes the model can tell apart is not. No particular $w$ need be named for this: every model with \emph{some} finite window has \emph{some} floor, so the whole union over finite window sizes is covered---what is lost in passing to the union is only uniformity, since $\beta$ varies without bound across it.

It matters, though, \emph{when} the window is imposed. Some positional schemes make it architectural---a learned table of absolute positions cannot address a prefix longer than the table, and sliding-window attention is a property of the network as trained. But the schemes now most common, among them RoPE, ALiBi, and sinusoidal encodings, define the attention computation at every prefix length and so impose no limit of their own. Where the window is imposed on a model of that kind, it is imposed after training-by-minimization: what empirical risk minimization (Bach, 2024) results in is one model $q^\textsc{erm}$, which is windowed to output another model $q^\textsc{erm}_w$:\footnote{The canonical results of that tradition---\textsc{pac} learnability, and its characterization by \textsc{vc} dimension---are developed for binary classification under the $0$--$1$ loss (Shalev-Shwartz and Ben-David, 2014), and extend to other losses on the assumption that the loss is bounded. The distinction that matters here is therefore not between hard and soft classification but between bounded and unbounded loss. The Brier score is soft and bounded, and the standard guarantees cover it; the logarithmic loss is soft and unbounded, and they do not. Appendix~\ref{sec-estimand} takes up what changes if one adopts the former.}
\[
q^\textsc{erm} \mapsto q^\textsc{erm}_w, \text{ with } q^\textsc{erm}_w(x_t \mid x_{1:t-1}) \;:=\; q^\textsc{erm}(x_t \mid x_{t-w:t-1}) ,
\]
the two agreeing only on prefixes shorter than the window $w$. 

%===

Whether this matters depends on how the window $w$ compares with the lengths of the training sequences, and the two cases come apart sharply. The uninteresting case first. If some training sequence is longer than $w$, then truncation discards prefix material the training objective was scored on, so $q^\textsc{erm}_w$ has strictly greater empirical risk than $q^\textsc{erm}$ and is not a minimizer at all. The guarantee argued for $q^\textsc{erm}$ has then not been argued for the model actually deployed, and no general bound relates the two, since truncating context can change a conditional by as much as one likes.

The interesting case is the usual one. Let $w$ be at least the length of every training sequence; no training prefix is longer than $w$; and therefore $q_{\theta,w}$ and $q_\theta$ agree on every training example, for every $\theta$ alike. The windowed and unwindowed classes have pointwise-identical empirical risks, and $q^\textsc{erm}_w$ attains the same minimum that $q^\textsc{erm}$ does. So windowing after training is no departure from empirical risk minimization---and it is a strict gain, since the windowed model, unlike the unwindowed one, has a floor and hence a per-token risk that can be consistently estimated. Far from compromising the minimization, the window \emph{empowers} it: what the procedure returns is an empirical risk minimizer whose risk one can afterward come to estimate with the held-out data.

This is where the impossibility result earns something. Bounded context windows were adopted for computational reasons, having nothing to do with statistics: attention costs quadratic time in the window, and one cannot afford an unbounded one. What the foregoing shows is that the same choice purchases, as a byproduct, precisely the property that makes held-out evaluation of the per-token risk a consistent procedure---and purchases it, in the usual case, at no cost in empirical risk whatever. The finite window thus has a \emph{statistical} rationale alongside its computational one, and the two are independent: neither was adopted with the other in view.

Strictly speaking, the rationale is not one for the context window as such, since there are other ways to secure a floor---capping logits to within $[-M,M]$, for instance. What it is a rationale for is the floor, of which the window is the near-universal supplier in practice.

%===

Two limits on the result should be recorded. The floor is quantitatively vacuous: real values of $\beta$ are on the order of $e^{-30}$, so the bound $\rho \leq \log(1/\beta)$ is finite and useless, and what is rescued is the existence of the estimand, not any control over its size. And consistency on $\Pcal_{<\infty} \times \{q \text{ floored}\}$ remains conditional on $p \in \Pcal_{<\infty}$---an assumption about the data generation alone, but one that no amount of data can verify: by Proposition~\ref{prop-untestable}, every test of finite expected sequence length is itself densely inconsistent. The floor relocates the unanswerable question. It does not dissolve it.

\subsection{Restricting the Estimand: A Screened Report}\label{sec-screened}

Pretraining for language models does not consist in running one algorithm and deploying its output. It consists in generating many candidate models---by empirical risk minimization, but at various scales, on various data mixtures, under various hyperparameters---then possibly modifying them by imposing a context window as discussed above, and finally sifting them by their estimated held-out risks, returning one whose estimate is lowest. Taking that entire procedure as the object of analysis, rather than the empirical risk minimization alone, makes a statistical theory of per-token risk estimation part of the core of a learning theory for pretraining: the second stage involves risk estimation, and whatever the first stage achieves is invisible to the practitioner except through it. Risk estimation using held-out data is not usually placed at the center of learning theory, and the results above suggest a reason for placing it closer.

Now, the sifting stage, if viewed as model selection, almost motivates a way out on its own. What it needs of a risk estimate is not a number but a comparison and a veto: return the candidate with the lowest estimate among those not declared hopeless, and return nothing at all if every candidate is declared hopeless. So, fix a real-valued threshold $r \in (0,\infty)$ in advance and ask for less than a usual estimator---a procedure that either reports a value below $r$ or else reports only that the risk is at least $r$:
\[
\rhohat^{\,r}_n(y;q) \;:=\;
\begin{cases}
\rhohat^{\mathrm{hold}}_n(y;q), & \text{if } \rhohat^{\mathrm{hold}}_n(y;q) < r,\\[4pt]
\text{``}\rho \geq r\text{''}, & \text{otherwise}.
\end{cases}
\]
Nothing beyond the law of large numbers is needed to see that this succeeds. If $\rho(p,q) < r$ then $\rhohat^{\mathrm{hold}}_n \to \rho(p,q)$ almost surely and so is eventually below $r$, and the value is eventually reported; if $\rho(p,q) > r$, the infinite case included, then $\rhohat^{\mathrm{hold}}_n \to \rho(p,q) > r$ and the verdict ``$\rho \geq r$'' is eventually reported. The screened procedure is thus consistent at every state except those with $\rho(p,q) = r$ exactly, where it may vacillate forever---the standard price of a threshold fixed in advance, and the reason $r$ must be so fixed rather than chosen from the data. But, if we wish, we can easily modify our estimator by inserting a procedure for consistently testing the point hypothesis $\rho = r$. 

What has changed is not the procedure's information but the goal it is asked to pursue. Note first that $\rhohat^{\,r}$ is not an estimator in the sense of Definition~\ref{def-estimator}: its outputs are not confined to $[0,\infty]$, one of them being a verdict rather than a value. That is exactly why the theorems, which quantify over every estimator, leave it untouched. And the reason it can afford the coarser output is that Definition~\ref{def-error} required an estimator succeeding where the risk is infinite to \emph{certify} the infinitude by returning $\infty$; the screened procedure never certifies anything, and is not asked to. It also loses nothing that model selection needed: among the candidates that clear the threshold one may still rank by the reported values, and the exact risk of a rejected model was never going to affect the choice. See Appendix~\ref{sec-estimand} for further discussion of changing the estimand.

A theory of the held-out average as a point estimator is not yet a theory of model selection by held-out average, any more than a theory of the cross-validation score is a theory of cross-validation model selection. Selection needs only comparisons among candidates, and whether comparisons are easier to estimate than levels is a question the theorems above do not settle. It belongs to future research.

\section{More Possible Ways Out}\label{sec-more-out}

The two exits presented in Section~\ref{sec-ways} are not the only ones, and it is worth seeing the whole field at once. The main result can be restated as a list of conditions that cannot all hold together, each of them a candidate for abandonment; keywords are underlined:
\ope 
\im \textbf{Sequence space: Unbounded domain.} There is no bound on the length of a sequence, so the set $\X$ of sequences is \uline{countably infinite}.

\im \textbf{Estimand: Per-token cross entropy.} The target of estimation is the \uline{per-token cross-entropy risk} of a given autoregressive model $q$ at an unknown data-generating distribution $p$.

\im \textbf{Estimation error: Infinity as a possibility.} Estimation error is defined in the standard way, except it is extended to include the following case: when the estimand is infinite but the estimate is only a finite (real) number, the error is their intuitive difference, which is \uline{infinity}.

\im \textbf{Evaluative standard: No dense inconsistency.} Estimators are evaluated in terms of whether consistency holds---i.e., whether estimation error converges in probability to zero. More specifically, an estimator is required to be \uline{pointwise consistent} with respect to the agreed upon state space $\Scal$, which is a product of a class of data-generating distributions with a class of models---or, lowering the bar almost to the point of absurdity, at least \uline{avoid dense inconsistency} with respect to $\Scal$.
\ope  
\im[4.1] \textbf{Distribution class: Bounded expected length.} The data-generating distributions on the table are exactly the distributions $p$ on $\X$ whose expected length is \uline{bounded by $L$}, for a constant $L > 1$ fixed in advance. (Note that, to escape the impossibility result here is not to weaken this condition, but to strengthen it.)

\im[4.2] \textbf{Model class: Full-support.} The autoregressive models on the table are exactly the distributions $q$ on $\X$ with \uline{full support}. (Note that, to escape the impossibility result here is not to weaken this condition, but to strengthen it.)

\im[4.3] \textbf{Topology: Total variation.} The notion of denseness in the previous condition is defined with respect to the \uline{total variation} topology. (Note that, to escape the impossibility result here is to adopt a finer topology, one that grants more open sets.)
\ede  

\im \textbf{Sampling: IID.} The holdout sample is drawn \uline{independently and identically} from the data-generating distribution $p$.
\ede

Six of the eight receive a subsection of Appendix~\ref{sec-more-ways-out}, in the order just given. The other two are settled more briefly. Abandoning the \emph{evaluative standard} is not available: pointwise consistency already demands nothing at any finite sample size, avoidance of dense inconsistency is a weakened form of it, and what that weakened form denies is set out in Section~\ref{sec-theorems}. Abandoning \emph{i.i.d.\ sampling} would make matters worse rather than better, since dependence among the sampled sequences can only slow the convergence on which every positive result here relies; the condition is listed because the eight must be jointly inconsistent as stated, not because giving it up is a route anyone would take.

\paragraph{Where the above two exits sat.}
Section~\ref{sec-ways} has already taken two of these. The floor of Section~\ref{sec-floor} modifies the \emph{model class} condition---
by restricting the range of models that may be evaluated, at the cost of an architectural commitment about $q$: narrowing $\Qcal_{>0}$ to the models that have a floor. The screened report of Section~\ref{sec-screened} modifies the \emph{estimand} condition---by restricting what is to be estimated, at the cost of learning nothing about how bad a bad model is. It replaces the per-token risk by a coarser quantity that answers the question a model-selection procedure actually puts. %That a condition can be abandoned in either direction is worth keeping in view: Appendix~\ref{sec-model} considers weakening full support, where Section~\ref{sec-floor} strengthens it, and the two moves have opposite verdicts.

\paragraph{What the remaining exits come to.}
Appendix~\ref{sec-more-ways-out} takes the rest in detail; four points are worth recording here. \emph{Bounding the sequence length} works completely, and is the only exit that buys a positive theorem outright---but it is the conjunction of two other conditions on the list, neither of which suffices alone, and no bound on document length is given by anything in the data. \emph{Estimating something else} divides: a bounded score such as the Brier score is estimable at Hoeffding rates but says almost nothing on a space as large as $\X$, while the Kullback--Leibler divergence is interpretable but is not the expectation of anything a sample makes available, so no procedure of the holdout kind estimates it. \emph{Insisting that the estimation error stay finite} turns out, on inspection, to be the change-of-estimand move under another name. And \emph{using a finer topology} can defeat denseness, though only under one of the two directions of the Kullback--Leibler divergence, and even then it leaves Theorems~\ref{thm-first} and~\ref{thm-second} untouched, since neither makes any reference to a topology.

\section{Closing}\label{sec-closing}

Held-out log loss is the number by which language models are compared, and the argument that licenses it is one every practitioner knows: the holdout average is unbiased, and the law of large numbers does the rest. What that argument passes over is whether there is anything for the average to converge to. The per-token risk exists only where the data has finite expected sequence length, and is finite only where the model's loss does not outrun that length---and neither condition is settled by any amount of care in specifying the data or the model, since each lies arbitrarily close to its own negation.

What follows is not that held-out evaluation is unsound. It is that there is no free lunch, and we want to know what price we are willing to pay. The soundness of held-out evaluation rests on a presupposition the practice does not announce: that the expected sequence length under the true, unknown data-generating distribution is finite. Only there does the per-token risk exist, and---for a model with a floor, which is to say for any model with a bounded context window---only there is it finite. That presupposition is not one the data can be made to settle, since by Proposition~\ref{prop-untestable} every test of it is densely inconsistent. Moreover, theorem~\ref{thm-first} says no estimator succeeds wherever the estimand exists; Theorem~\ref{thm-second} says this survives bounding the expected length and restricting to full-support models; Theorem~\ref{thm-third} says the failures are, in that restricted setting, dense.

Two interesting exits are discussed above and neither is free. Screening the estimand---reporting the risk below a threshold and only a verdict above it---restores consistency, at the price of learning nothing about how bad a rejected model is; and that is no loss at all if what the number is for is selecting among candidates. Flooring the model restores consistency too, and the finite context window supplies the floor as a byproduct of a choice made on entirely computational grounds. But it relocates the presupposition rather than discharging it: what must now be assumed is that the data has finite expected length, and that assumption is no more testable than the one it replaces.

Whether the estimand is worth the assumptions it requires is a question about what the number is wanted for. The impossibility does not answer it. It makes the question unavoidable. 

However, one methodological lesson seems to emerge for theorists, if not for practitioners. To develop a statistical learning theory for pretraining language models, an account of risk estimation is no less central than the account of empirical risk minimization that standard learning theory provides. The algorithm actually in use has two stages: empirical risk minimization on training data, and then model selection by risk estimation on held-out data. Flooring a minimization-trained model is one route to consistent estimation (Section~\ref{sec-floor}), and, when the window is at least the training length, a route that costs nothing in empirical risk---so that minimization is not compromised but extended. Settling for screened estimation is another (Section~\ref{sec-screened}), already good enough for the selection stage. Either suffices to escape the impossibility result of this paper, and both point to the importance of risk estimation for a statistical learning theory of pretraining.

\section*{References}

\begin{description}\setlength{\itemsep}{0.45em}

\im Antos, A., and I. Kontoyiannis (2001). Convergence properties of functional estimates for discrete distributions. {\em Random Structures \& Algorithms} 19(3--4), 163--193.

\im Bach, F. (2024). {\em Learning Theory from First Principles}. The MIT Press.

\im Bahadur, R. R., and L. J. Savage (1956). The nonexistence of certain statistical procedures in nonparametric problems. {\em Annals of Mathematical Statistics} 27(4), 1115--1122.

\im Boeken, P., E. Skapinakis, K. Genin, and J. M. Mooij (2026). Topological criteria for hypothesis testing with finite-precision measurements. arXiv:2601.13946.

\im Brier, G. W. (1950). Verification of forecasts expressed in terms of probability. {\em Monthly Weather Review} 78(1), 1--3.

\im Cohen, D., A. Kontorovich, A. Koolyk, and G. Wolfer (2021). Dimension-free empirical entropy estimation. arXiv:2105.07408.

\im Donoho, D. L. (1988). One-sided inference about functionals of a density. {\em The Annals of Statistics} 16(4), 1390--1420.

\im Durrett, R. (2019). {\em Probability: Theory and Examples}, 5th edition. Cambridge University Press.

\im Ermakov, M. (2017). On consistent hypothesis testing. {\em Journal of Mathematical Sciences} 225(5), 751--769.

\im Genin, K., and K. T. Kelly (2017). The topology of statistical verifiability. In {\em Proceedings of the Sixteenth Conference on Theoretical Aspects of Rationality and Knowledge (TARK 2017)}, Electronic Proceedings in Theoretical Computer Science 251, 236--250.

\im Gneiting, T., and A. E. Raftery (2007). Strictly proper scoring rules, prediction, and estimation. {\em Journal of the American Statistical Association} 102(477), 359--378.

\im Hoffmann, J., S. Borgeaud, A. Mensch, E. Buchatskaya, T. Cai, E. Rutherford, D. de Las Casas, L. A. Hendricks, J. Welbl, A. Clark, et al. (2022). Training compute-optimal large language models. In {\em Advances in Neural Information Processing Systems} 35.

\im Kaplan, J., S. McCandlish, T. Henighan, T. B. Brown, B. Chess, R. Child, S. Gray, A. Radford, J. Wu, and D. Amodei (2020). Scaling laws for neural language models. arXiv:2001.08361.

\im Karger, E., H. Bastani, C. Yueh-Han, Z. Jacobs, D. Halawi, F. Zhang, and P. E. Tetlock (2025). {ForecastBench}: a dynamic benchmark of {AI} forecasting capabilities. In {\em International Conference on Learning Representations (ICLR)}. arXiv:2409.19839.

\im Kim, I., M. Neykov, S. Balakrishnan, and L. Wasserman (2022). Local permutation tests for conditional independence. {\em The Annals of Statistics} 50(6), 3388--3414.

\im Levin, D. A., and Y. Peres (2017). {\em Markov Chains and Mixing Times}, 2nd edition. American Mathematical Society.

\im Lindvall, T. (2002). {\em Lectures on the Coupling Method}. Dover.

\im McCutcheon, R. G. (2019). In favor of logarithmic scoring. {\em Philosophy of Science} 86(2), 286--303.

\im Mossel, E., and M. I. Ohannessian (2019). On the impossibility of learning the missing mass. {\em Entropy} 21(1), 28.

\im Neykov, M., S. Balakrishnan, and L. Wasserman (2021). Minimax optimal conditional independence testing. {\em The Annals of Statistics} 49(4), 2151--2177.

\im Shah, R. D., and J. Peters (2020). The hardness of conditional independence testing and the generalised covariance measure. {\em The Annals of Statistics} 48(3), 1514--1538.

\im Shalev-Shwartz, S., and S. Ben-David (2014). {\em Understanding Machine Learning: From Theory to Algorithms}. Cambridge University Press.

\im Shuford, E. H., A. Albert, and H. E. Massengill (1966). Admissible probability measurement procedures. {\em Psychometrika} 31(2), 125--145.

\im Wyner, A. J., and D. Foster (2003). On the lower limits of entropy estimation. Manuscript, submitted to {\em IEEE Transactions on Information Theory}. Available at \url{https://deanfoster.net/research/low_limits_entropy.pdf}.

\end{description}

\appendix

\section{Proofs}\label{sec-proofs}

The argument proceeds in three stages. First we reduce estimation of the per-token risk $\rho$ to a testing problem about the sequence risk $R$ (Section~\ref{sec-reduction}); this is the only place where $R$ is needed, and it is what lets the rest of the appendix work with the simpler quantity. Then we prove the testing problem unsolvable on $\Pcal_{<L}$, for a suitable model $q$ and every admissible $L$ (Sections~\ref{sec-lemmas} and~\ref{sec-relative}). Finally, we assemble the three theorems, obtaining density in the state space from density in each of its two factors (Section~\ref{sec-proof-second}).

\subsection{Reduction to a Testing Problem}\label{sec-reduction}

Fix any model $q \in \Qcal$. Put
\[
H_0 := \{ p \in \Pcal_{<\infty} : R(p,q) < \infty \},
\qquad
H_1 := \{ p \in \Pcal_{<\infty} : R(p,q) = \infty \} ,
\]
so that $H_0$ and $H_1$ partition $\Pcal_{<\infty}$.

\begin{definition}[Testing finiteness of risk]\label{def-test}
A \textbf{test} is a sequence $\phi = (\phi_n)_{n \geq 1}$ of functions $\phi_n : \X^n \to \{0,1\}$, the value $1$ read as the verdict ``finite'' and $0$ as ``infinite''. Write $a_n(p) := \PP_{p,n}(\phi_n = 1)$. The test is \textbf{consistent at} $p \in \Pcal_{<\infty}$ iff $a_n(p) \to 1$ when $p \in H_0$ and $a_n(p) \to 0$ when $p \in H_1$; and it is \textbf{densely inconsistent} with respect to $\mathcal{C} \subseteq \Pcal_{<\infty}$ iff the set of distributions in $\mathcal{C}$ at which it is inconsistent is dense in $\mathcal{C}$.
\end{definition}

\begin{lemma}[Estimating $\rho$ decides the finiteness of $R$]\label{lem-reduction}
Let $q \in \Qcal$ be any model and let $\rhohat$ be an estimator of the per-token risk. Define the induced test by
\[
\phi_n(y) := 1 \iff \rhohat_n(y;q) < \infty .
\]
Then $\phi$ is consistent at every $p \in \Pcal_{<\infty}$ at which $\rhohat$ is consistent at $(p,q)$. Consequently, if $\phi$ is densely inconsistent with respect to $\mathcal{C} \subseteq \Pcal_{<\infty}$, then the set of $p \in \mathcal{C}$ at which $\rhohat$ is not consistent at $(p,q)$ is dense in $\mathcal{C}$.
\end{lemma}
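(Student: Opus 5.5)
The argument will go straight through the definitions; the key fact is that for $p \in \Pcal_{<\infty}$ we have $m(p) \in [1,\infty)$. Hence $\rho(p,q) = R(p,q)/m(p)$ is finite exactly when $R(p,q)$ is finite. In other words, $p \in H_0$ iff $\rho(p,q) < \infty$, and $p \in H_1$ iff $\rho(p,q) = \infty$. Everything then reduces to reading off what consistency at $(p,q)$ says about the event $\{\rhohat_n(\cdot;q) < \infty\}$. I will fix $p \in \Pcal_{<\infty}$ at which $\rhohat$ is consistent at $(p,q)$, take $\epsilon = 1$ in Definition~\ref{def-consistency}, and split into two cases.

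\emph{Case $p \in H_0$.} Here $\rho(p,q)$ is finite. By Definition~\ref{def-error}, an infinite estimate would have error $\infty$, so the event $\{\Error(\rhohat_n;p,q) < 1\}$ is contained in $\{\rhohat_n(\cdot;q) < \infty\} = \{\phi_n = 1\}$. Consistency therefore gives $a_n(p) \geq \PP_{p,n}(\Error < 1) \to 1$.

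\emph{Case $p \in H_1$.} Here $\rho(p,q) = \infty$, and a finite estimate incurs error $\infty$. So $\{\Error < 1\}$ is contained in $\{\rhohat_n = \infty\} = \{\phi_n = 0\}$, which gives $1 - a_n(p) \to 1$, that is, $a_n(p) \to 0$.

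In both cases $\phi$ is consistent at $p$ in the sense of Definition~\ref{def-test}.

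One point needs checking: $\phi_n$ must be a legitimate test, mapping $\X^n \to \{0,1\}$ with $q$ held fixed. It is, because $q$ is fixed throughout and $\phi_n$ depends on the sample alone. Measurability is trivial since $\X^n$ is countable.

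For the second part, I will use the contrapositive of the first. Every $p \in \mathcal{C}$ at which $\phi$ is inconsistent is a point at which $\rhohat$ is inconsistent at $(p,q)$. So the set of failures of $\rhohat$ contains the set of failures of $\phi$, and a superset of a dense subset of $\mathcal{C}$ is dense in $\mathcal{C}$.

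I do not expect a real obstacle. The only step that needs care is the case analysis on Definition~\ref{def-error}: it is the clause assigning error $\infty$ when exactly one of the estimate and the estimand is infinite that converts small error into the correct verdict. Equally, it relies on the fact that $m(p) \ge 1$ is finite, so that finiteness of $\rho$ and of $R$ coincide.
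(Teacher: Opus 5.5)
Your proposal is correct and follows the paper's proof essentially step for step. Both arguments use $1 \le m(p) < \infty$ to identify $H_0$ and $H_1$ with finiteness of $\rho(p,q)$, then take $\epsilon = 1$ so that the ``exactly one infinite'' clause of Definition~\ref{def-error} places $\{\Error(\rhohat_n;p,q) < 1\}$ inside $\{\phi_n = 1\}$ or $\{\phi_n = 0\}$ according to the case, and finally obtain the density claim by contraposition; the paper states equality $\{\Error < 1\} = \{\rhohat_n = \infty\}$ in the infinite case, but your containment is all that is needed.
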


\begin{proof}
Let $p \in \Pcal_{<\infty}$, so that $1 \leq m(p) < \infty$ and hence
\[
\rho(p,q) = \frac{R(p,q)}{m(p)} < \infty \iff R(p,q) < \infty .
\]
Suppose $\rhohat$ is consistent at $(p,q)$ and take $\epsilon := 1$ in Definition~\ref{def-consistency}. If $R(p,q) < \infty$ then $\rho(p,q) < \infty$, and the event $\{\Error(\rhohat_n; p,q) < 1\}$ is contained in $\{\rhohat_n < \infty\} = \{\phi_n = 1\}$, since $\Error(\infty, v) = \infty$ for finite $v$; so $a_n(p) \to 1$, as required for $p \in H_0$. If instead $R(p,q) = \infty$ then $\rho(p,q) = \infty$, and $\{\Error(\rhohat_n; p,q) < 1\} = \{\rhohat_n = \infty\} = \{\phi_n = 0\}$; so $a_n(p) \to 0$, as required for $p \in H_1$. The final clause follows by taking complements: every $p$ at which $\phi$ is inconsistent is a $p$ at which $\rhohat$ is not consistent at $(p,q)$.
\end{proof}

It therefore suffices to exhibit, for a suitable $q$, a testing problem that no test solves.

\subsection{Four Lemmas}\label{sec-lemmas}

The first two lemmas are general facts; the last two do the work specific to $\Pcal_{<\infty}$, and with them Proposition~\ref{prop-dense} below.

Throughout this appendix, fix an enumeration $\X = \{x_1,x_2,\ldots\}$ \emph{ordered by length}, ties among sequences of equal length broken arbitrarily, and write $p_k := p(x_k)$, $q_k := q(x_k)$, $l_k := |x_k|$, so that $m(p) = \sum_k p_k l_k$. One counting fact is used repeatedly and no other is. A sequence of length $l$ consists of $l-1$ tokens from $\mathcal{V}_0$ followed by $\eos$, so writing $V := |\mathcal{V}_0| \geq 1$ there are exactly $V^{\,l-1}$ sequences of length $l$, and consequently, for any $c > 0$,
\begin{equation}\label{eq-length-order}
\sum_{k \geq 1} e^{-c\,l_k^2} \;=\; \sum_{l \geq 1} V^{\,l-1} e^{-c\,l^2} \;<\; \infty ,
\qquad
\sum_{k \geq 1} l_k\, e^{-c\,l_k^2} \;=\; \sum_{l \geq 1} l\,V^{\,l-1} e^{-c\,l^2} \;<\; \infty ,
\end{equation}
both series converging by the ratio test, consecutive terms having ratio $V e^{-c(2l+1)} \to 0$ in the first case and $\tfrac{l+1}{l}V e^{-c(2l+1)} \to 0$ in the second: a Gaussian factor defeats a geometric one. Sequences of a given length are finite in number and their probabilities under $e^{-c\,l^2}$ decay fast enough that the multiplicity does not matter---which is why nothing below depends on the size of the vocabulary. We write $B_\epsilon(p)$ and $\overline{B}_\epsilon(p)$ for the open and closed balls of radius $\epsilon$ about $p$.

\begin{lemma}[Lower semicontinuity]\label{lem-lsc}
Let $c_k \in [0,\infty)$ for every $k$ and define $\Psi(p) := \sum_k p_k c_k$. Then $\Psi$ is lower semicontinuous with respect to $\dTV$: if $\dTV(p^{[m]},p) \to 0$ then $\liminf_m \Psi(p^{[m]}) \geq \Psi(p)$. In particular, taking $c_k := l_k$, if $\dTV(p^{(k)},p^*) \to 0$ and $\mathbb{E}_{p^{(k)}}|X| \leq M$ for all $k$, then $\mathbb{E}_{p^*}|X| \leq M$.
\end{lemma}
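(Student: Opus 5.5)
The plan is to show that total-variation convergence implies pointwise convergence of the mass functions, and then to apply Fatou's lemma for series. The first step is the observation that for each fixed $k$, taking $A := \{x_k\}$ in the definition of $\dTV$ gives
\[
|p^{[m]}_k - p_k| \;\leq\; \dTV(p^{[m]},p) \;\to\; 0 .
\]
So $p^{[m]}_k \to p_k$ for every $k$.

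The second step uses the fact that all terms $p_k c_k$ are nonnegative. For any finite $K$ we have $\Psi(p^{[m]}) \geq \sum_{k \leq K} p^{[m]}_k c_k$. A finite sum of convergent sequences converges, so
\[
\liminf_m \Psi(p^{[m]}) \;\geq\; \lim_m \sum_{k \leq K} p^{[m]}_k c_k \;=\; \sum_{k \leq K} p_k c_k .
\]
Letting $K \to \infty$ on the right then yields $\liminf_m \Psi(p^{[m]}) \geq \Psi(p)$. This argument works whether $\Psi(p)$ is finite or $+\infty$; in the infinite case the liminf is forced to be $+\infty$.

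The ``in particular'' clause follows by taking $c_k := l_k$, so that $\Psi(p) = m(p) = \mathbb{E}_p|X|$. The hypotheses give $\Psi(p^{(k)}) \leq M$ for every $k$ and $p^{(k)} \to p^*$ in $\dTV$. Lower semicontinuity then gives
\[
\mathbb{E}_{p^*}|X| \;\leq\; \liminf_k \mathbb{E}_{p^{(k)}}|X| \;\leq\; M .
\]

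No step is a real obstacle. The only point needing care is to avoid interchanging the limit and the infinite sum, since $\Psi$ need not be continuous: mass can escape to long sequences. Truncating to finitely many terms before passing to the limit is what avoids this, and it is the reason the statement gives only an inequality.
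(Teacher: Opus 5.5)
Your proposal is correct and follows the paper's proof essentially step for step: coordinatewise convergence obtained by taking $A=\{x_k\}$, then the liminf bounded below by finite truncations using nonnegativity, then letting the truncation index go to infinity. Your explicit derivation of the ``in particular'' clause, which the paper leaves implicit, is also fine.
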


\begin{proof}
Total variation convergence forces coordinatewise convergence, since taking $A = \{x_k\}$ in the definition of $\dTV$ gives $|p^{[m]}_k - p_k| \leq \dTV(p^{[m]},p)$ for each fixed $k$. All terms being nonnegative, for every $N$
\[
\liminf_m \sum_k p^{[m]}_k c_k \;\geq\; \liminf_m \sum_{k \leq N} p^{[m]}_k c_k \;=\; \sum_{k \leq N} p_k c_k ,
\]
a finite sum of convergent sequences; let $N \to \infty$.
\end{proof}

\begin{lemma}[Continuity property]\label{lem-continuous}
For any sample size $n$, any event $E \subseteq \X^n$, and all $p,p' \in \Pcal$,
\[
\big|\PP_{p',n}(E) - \PP_{p,n}(E)\big| \;\leq\; n\,\dTV(p,p') .
\]
Hence $p \mapsto \PP_{p,n}(E)$ is Lipschitz with constant $n$, uniformly in $E$: given $\delta > 0$, every $p' \in B_{\delta/n}(p)$ satisfies $|\PP_{p',n}(E) - \PP_{p,n}(E)| < \delta$.
\end{lemma}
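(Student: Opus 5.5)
The plan is to prove the bound by a coupling argument, passing from one draw to $n$ i.i.d.\ draws. First I handle $n=1$. For a single draw and any $A \subseteq \X$, the definition of $\dTV$ gives $|P'(A) - P(A)| \leq \dTV(p,p')$ directly. For general $n$, I compare the product measures $\PP_{p,n}$ and $\PP_{p',n}$ by a hybrid (telescoping) argument. For $j = 0,1,\ldots,n$, let $\mathbb{Q}_j$ be the product measure on $\X^n$ whose first $j$ coordinates are drawn from $p'$ and whose remaining $n-j$ coordinates are drawn from $p$. Then $\mathbb{Q}_0 = \PP_{p,n}$ and $\mathbb{Q}_n = \PP_{p',n}$, so
\[
\big|\PP_{p',n}(E) - \PP_{p,n}(E)\big| \;\leq\; \sum_{j=1}^{n} \big|\mathbb{Q}_j(E) - \mathbb{Q}_{j-1}(E)\big| .
\]

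The key step is bounding each term of this sum by $\dTV(p,p')$. The measures $\mathbb{Q}_j$ and $\mathbb{Q}_{j-1}$ differ only in the $j$-th coordinate. Condition on the other $n-1$ coordinates, which have the same law $\mu$ under both measures. For each fixed value $z$ of those coordinates, let $E_z \subseteq \X$ be the section of $E$ in the $j$-th coordinate. Then
\[
\mathbb{Q}_j(E) - \mathbb{Q}_{j-1}(E) = \sum_z \mu(z)\,\big(P'(E_z) - P(E_z)\big).
\]
This is a countable sum, since $\X^{n-1}$ is countable, so no measurability issues arise. Its absolute value is at most $\sum_z \mu(z)\,\dTV(p,p') = \dTV(p,p')$.

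Summing the $n$ terms gives the bound $n\,\dTV(p,p')$, and it holds uniformly in $E$. The Lipschitz restatement is then immediate: if $\dTV(p,p') < \delta/n$, the difference is less than $\delta$.

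The main obstacle is only bookkeeping in the sectioning step, namely applying the single-coordinate bound section by section. An equivalent alternative is to take a maximal coupling of $p$ and $p'$, independently in each coordinate, with $\Pr(Y_j \neq Y'_j) = \dTV(p,p')$. A union bound then gives $\Pr(Y \neq Y') \leq n\,\dTV(p,p')$, which bounds $|\PP_{p',n}(E) - \PP_{p,n}(E)|$. I would present the telescoping version because it is self-contained.
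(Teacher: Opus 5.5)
Your proof is correct. Your main argument differs from the paper's, which uses precisely the alternative you mention at the end. The paper builds a maximal coupling of $p$ and $p'$ by hand: with probability $1-\lambda$ it takes a common draw from $\min\{p_k,p'_k\}/(1-\lambda)$, and otherwise independent draws from the normalized excesses. It then takes $n$ independent copies and bounds $|\PP_{p,n}(E)-\PP_{p',n}(E)|$ by $\PP(\vec X\neq\vec X')\le n\lambda$ via a union bound.

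Your hybrid argument works directly from the sup-over-events definition of $\dTV$ that the paper adopts. It therefore needs no explicit construction. It needs no separate treatment of the degenerate cases $\lambda\in\{0,1\}$, which the paper must set aside because it divides by $\lambda$ and $1-\lambda$. It also needs no appeal to the identity $\sum_k\min\{p_k,p'_k\}=1-\dTV(p,p')$, which the paper asserts without proof to connect its definition of $\dTV$ to the coupling.

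The coupling route, in return, gives the more probabilistic picture. There the bound is the chance that two coupled samples disagree, the subadditivity of total variation across product measures noted in the paper's footnote. Your bound, being uniform in $E$, yields the same product-measure inequality $\dTV(\PP_{p,n},\PP_{p',n})\le n\,\dTV(p,p')$. Both routes deliver the explicit modulus $\delta/n$, uniform in $E$, which is all the construction in Proposition~\ref{prop-relative} uses.
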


\begin{proof}
Write $\lambda := \dTV(p,p')$. If $\lambda = 0$ then $p = p'$ and there is nothing to prove; if $\lambda = 1$ the left side never exceeds $1 \leq n$. So assume $0 < \lambda < 1$ and set $m_k := \min\{p_k,p'_k\}$, so that $\sum_k m_k = 1-\lambda$ and $\sum_k (p_k - m_k) = \lambda = \sum_k (p'_k - m_k)$. Couple $p$ and $p'$ as follows: with probability $1-\lambda$ draw $Z$ from $k \mapsto m_k/(1-\lambda)$ and set $X = X' = Z$; otherwise draw $X$ from $k \mapsto (p_k-m_k)/\lambda$ and, independently, $X'$ from $k \mapsto (p'_k-m_k)/\lambda$. Each mass function is legitimate by the displayed identities, the marginals are $p$ and $p'$, and $\PP(X \neq X') \leq \lambda$. Taking $n$ independent copies and writing $\vec{X},\vec{X}'$ for the resulting samples, $\PP(\vec{X} \neq \vec{X}') \leq n\lambda$ by a union bound, and for any $E$,
\[
\PP_{p,n}(E) - \PP_{p',n}(E) = \PP(\vec{X} \in E) - \PP(\vec{X}' \in E) \leq \PP(\vec{X} \neq \vec{X}') \leq n\lambda ,
\]
with the same bound on exchanging $p$ and $p'$.\footnote{Equivalently, this is the subadditivity of total variation across product measures, $\dTV(\PP_{p,n},\PP_{p',n}) \leq n\,\dTV(p,p')$, with the maximal coupling; see Lindvall (2002) or Levin and Peres (2017, \S 4.2). The direct argument is given because it yields the explicit modulus $\delta/n$, uniform in $E$, which the construction below uses.}
\end{proof}

\begin{remark*}[Randomization gains nothing]
A randomized estimator at sample size $n$ is a function of the sample together with an independent seed $U \sim \mu$; the probability that it returns a finite value is $\int \PP_{p,n}(E_u)\,\mu(du)$ with $E_u := \{y : \rhohat_n(y,u;q) < \infty\}$. Each integrand is $n$-Lipschitz in $p$ by Lemma~\ref{lem-continuous}, uniformly in $u$, so the mixture is too, and every result below applies verbatim.
\end{remark*}

The next lemma exhibits the models that do the work, and shows them to be ubiquitous.

\begin{lemma}[The witness models]\label{lem-witness}
Call $q$ a \textbf{witness} iff $q$ has full support, $\mathbb{E}_q|X| < \infty$, and $-\log q_k \geq l_k^2 - C$ for some constant $C$ and all $k$---so that its loss on a sequence grows at least quadratically, and in particular superlinearly, in the length of that sequence. Witnesses exist, and they are dense in $\Qcal$: for every model $q^\circ \in \Qcal$---with no assumption of full support or of finite expected length---and every $\epsilon > 0$ there is a witness $q$ with $\dTV(q^\circ,q) < \epsilon$. Since every witness has full support, they are in particular dense in $\Qcal_{>0}$. Moreover, no witness has a floor.
\end{lemma}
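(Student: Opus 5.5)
The plan is to build one explicit witness, obtain density by a truncate-and-mix construction, and deduce the absence of a floor from the growth rates.

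\emph{Existence.} Put $Z := \sum_k e^{-l_k^2}$, which is finite by the first series in~\eqref{eq-length-order}. Define $w_k := e^{-l_k^2}/Z$. Then $w$ is a probability distribution on $\X$, and $-\log w_k = l_k^2 + \log Z$, so the growth condition holds with $C := -\log Z$. Its expected length $\sum_k l_k w_k$ is finite by the second series in~\eqref{eq-length-order}.

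Full support in the sense of Definition~\ref{def-support} concerns conditionals, not just sequence probabilities, so it needs a short check. Each conditional $w(v \mid x_{<t})$ is the ratio of the $w$-masses of two cylinder sets: the sequences extending $x_{<t}v$, and those extending $x_{<t}$. Both cylinders contain some complete sequence, and $w_k > 0$ for every $k$, so both masses are positive. Hence every conditional is positive.

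\emph{Density.} This is the step needing care. Naively mixing $q^\circ$ with $w$ fails, because it can only \emph{raise} probabilities, while the witness condition demands an \emph{upper} bound $q_k \le e^{C - l_k^2}$; the tail of $q^\circ$ could violate this. So first kill the tail of $q^\circ$, then mix:

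\begin{enumerate}
\item Given $q^\circ \in \Qcal$ and $\epsilon > 0$, choose $N$ with $\sum_{k > N} q^\circ_k < \epsilon/4$.
\item Let $q^\circ_N$ be $q^\circ$ restricted to $\{x_1,\ldots,x_N\}$ and renormalized. Then $\dTV(q^\circ, q^\circ_N) < \epsilon/2$.
\item Set $q := (1-\eta)\,q^\circ_N + \eta\, w$ with $\eta < \epsilon/2$. Then $\dTV(q^\circ, q) < \epsilon$, since $\dTV(q^\circ_N, q) \le \eta$.
\end{enumerate}

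Now check that $q$ is a witness.
\begin{itemize}
\item \emph{Full support:} $q$ dominates $\eta w$, so it inherits positivity of all conditionals by the cylinder argument above.
\item \emph{Finite expected length:} this is a convex combination of a finite sum and $\eta\,\mathbb{E}_w|X|$.
\item \emph{Tail indices $k > N$:} here $q_k = \eta w_k$, so $-\log q_k = l_k^2 + \log Z + \log(1/\eta)$.
\item \emph{Finitely many indices $k \le N$:} here $-\log q_k \ge 0 \ge l_k^2 - \max_{j \le N} l_j^2$.
\end{itemize}
Taking $C := \max\{\max_{j \le N} l_j^2,\; -\log Z - \log(1/\eta)\}$ therefore gives $-\log q_k \ge l_k^2 - C$ for all $k$. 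Density in $\Qcal_{>0}$ follows immediately, since $\Qcal_{>0} \subseteq \Qcal$ and every witness lies in $\Qcal_{>0}$.

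\emph{No floor.} Suppose a witness $q$ had a floor $\beta > 0$. The linear bound from Section~\ref{sec-floor} would give $-\log q_k \le l_k \log(1/\beta)$. Combined with the witness condition this yields $l_k^2 - C \le l_k \log(1/\beta)$ for all $k$. That is impossible, because $\X$ contains sequences of every length, so $l_k$ is unbounded.

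I expect the main obstacle to be only the direction issue in the density step: one must truncate $q^\circ$ before mixing rather than mix directly.
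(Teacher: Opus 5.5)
Your proposal is correct and follows essentially the paper's proof: the same witness $e^{-l_k^2}/Z$, the same truncate-then-mix construction with the same two-regime choice of $C$, and the same linear-versus-quadratic contradiction for the floor. The differences are cosmetic. You truncate by renormalizing rather than by moving the tail mass onto $x_1$, and you spell out the cylinder-set check that positive sequence masses give positive conditionals, which the paper leaves implicit. As the paper does, you should take $\epsilon \le 1$ without loss of generality, so that the renormalization is well defined and $\eta$ can be chosen with $0<\eta\le 1$.
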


\begin{proof}
Write $Z := \sum_{j \geq 1} e^{-l_j^2}$, finite and positive by~\eqref{eq-length-order}, and let
\[
g_k \;:=\; e^{-l_k^2}/Z
\]
be the distribution it defines. Then $g$ has full support; its expected length $\sum_k l_k g_k$ is finite, again by~\eqref{eq-length-order}; and $-\log g_k = l_k^2 + \log Z$ exactly.

Now let $q^\circ \in \Qcal$ and $\epsilon > 0$ be given, where we may take $\epsilon \leq 1$, since establishing the claim for small $\epsilon$ establishes it for large. Set $\alpha := \epsilon/2 \leq \tfrac12$, so that the mixture below is a genuine convex combination. Choose $N$ with $\sum_{k \geq N} q^\circ_k < \epsilon/2$, which forces $N \geq 2$; and let $h$ be the distribution obtained from $q^\circ$ by moving all the tail mass $\{k \geq N\}$ onto $x_1$, so that $h$ is supported on the finitely many sequences $x_1,\ldots,x_{N-1}$ and $\dTV(q^\circ,h) < \epsilon/2$. Define
\[
q \;:=\; (1-\alpha)\,h \;+\; \alpha\, g .
\]
Each of the three required properties now holds. \emph{Distance:} $\dTV(h,q) \leq \alpha = \epsilon/2$, so $\dTV(q^\circ,q) < \epsilon$. \emph{Full support:} $q_k \geq \alpha g_k > 0$ for every $k$, whatever zeros $q^\circ$ may have had. \emph{Finite expected length:} $\mathbb{E}_q|X| = (1-\alpha)\mathbb{E}_h|X| + \alpha\,\mathbb{E}_g|X| < \infty$, the first term because $h$ is finitely supported and the second by the above.

For the growth condition, note that $h_k = 0$ for $k \geq N$, so there $q_k = \alpha g_k$ exactly and
\[
-\log q_k \;=\; l_k^2 + \log(Z/\alpha) \;\geq\; l_k^2 - C_1 ,
\qquad C_1 := \max\{0,\ \log(\alpha/Z)\} ,
\]
the constant being needed because $Z < 1$ in general and $\alpha$ may exceed it. For the finitely many $k < N$ we have $-\log q_k \geq 0 \geq l_k^2 - \max_{k<N} l_k^2$. Taking $C := \max\{C_1,\ \max_{k<N} l_k^2\}$ gives $-\log q_k \geq l_k^2 - C$ throughout. Taking any $q^\circ$ at all yields existence. Note that the argument nowhere assumes $q^\circ$ to have full support or finite expected length---indeed $h$ discards its tail outright---which is what makes the density claim hold across the whole of $\Qcal$.

For the last claim, suppose a witness $q$ had a floor $\beta > 0$. Then $-\log q_k \leq l_k \log(1/\beta)$, growing at most linearly in $l_k$, whereas $-\log q_k \geq l_k^2 - C$ grows quadratically; since $l_k \to \infty$, the two are incompatible.
\end{proof}

Fix a witness $q$ for the remainder of this section, and let $H_0,H_1$ be as above for that $q$.

\begin{lemma}[Finite risk, densely and for free]\label{lem-h0-fin}
For every $p \in \Pcal_{<\infty}$ and $\epsilon > 0$ there is $p' \in H_0$ with $\dTV(p,p') < \epsilon$ and $\mathbb{E}_{p'}|X| \leq \mathbb{E}_p|X|$.
\end{lemma}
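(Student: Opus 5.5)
The plan is to truncate $p$: discard its tail and move that mass onto the shortest sequence $x_1$, whose length is $l_1 = 1$. This is the analogue, on the data side, of the construction of $h$ in Lemma~\ref{lem-witness}.

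Given $p \in \Pcal_{<\infty}$ and $\epsilon > 0$, choose $N$ with $\sum_{k \geq N} p_k < \epsilon$, which is possible because $\sum_k p_k = 1$. Define $p'$ by
\[
p'_1 := p_1 + \sum_{k \geq N} p_k, \qquad p'_k := p_k \ (2 \leq k < N), \qquad p'_k := 0 \ (k \geq N).
\]
The steps, in order, are as follows.

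\emph{Distance.} The total variation between $p$ and $p'$ is half the $\ell_1$ distance. This equals exactly the moved mass $\sum_{k \geq N} p_k < \epsilon$. If $N = 1$ nothing changes, so we may take $N \geq 2$ without loss.

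\emph{Length does not increase.} Since the enumeration is ordered by length, $l_1 = 1 \leq l_k$ for every $k$. Each unit of mass moved from $x_k$ to $x_1$ therefore changes $\sum_k p_k l_k$ by $1 - l_k \leq 0$. Hence $\mathbb{E}_{p'}|X| \leq \mathbb{E}_p|X| < \infty$, so in particular $p' \in \Pcal_{<\infty}$.

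\emph{Finite risk.} $p'$ is supported on the finite set $\{x_1,\ldots,x_{N-1}\}$, and the fixed witness $q$ has full support, so each $-\log q_k$ is finite. Thus $R(p',q) = \sum_{k<N} p'_k(-\log q_k)$ is a finite sum of finite terms, and $p' \in H_0$.

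I expect no real obstacle here. The only points needing care are that $\dTV$ with $\sup$ over sets equals half the $\ell_1$ distance (or, directly, the moved mass), and that full support of the witness is what makes the finitely many losses finite.
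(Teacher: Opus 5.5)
Your proposal is correct and is essentially the paper's own proof: truncate the tail of $p$, move that mass onto the shortest sequence $x_1$, and use the full support of the witness $q$ to make the finitely many losses, and hence $R(p',q)$, finite. The only differences are cosmetic, such as indexing the tail as $k \geq N$ rather than $k > N$ and your explicit remark that $N$ may be enlarged to ensure $N \geq 2$.
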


\begin{proof}
Choose $N$ with $t_N := \sum_{k>N} p_k < \epsilon$, and let $p'$ agree with $p$ for $2 \leq k \leq N$, vanish above $N$, and carry $p_1 + t_N$ at $k = 1$. Then $\dTV(p,p') = t_N < \epsilon$, and $R(p',q) \leq \max_{k \leq N}(-\log q_k) < \infty$ since the sum is finite and $q$ has full support, so $p' \in H_0$. As $x_1$ is a shortest sequence, mass has moved only from longer sequences to a shortest one, so $\mathbb{E}_{p'}|X| \leq \mathbb{E}_p|X|$.
\end{proof}

\begin{lemma}[Infinite risk, densely and cheaply]\label{lem-h1-fin}
For every $p \in \Pcal_{<\infty}$ and all $\epsilon,\eta > 0$ there is $p'' \in H_1$ with $\dTV(p,p'') < \epsilon$ and $\mathbb{E}_{p''}|X| \leq \mathbb{E}_p|X| + \eta$.
\end{lemma}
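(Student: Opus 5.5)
The plan is to mix $p$ with a small amount of a heavy-but-short-tailed distribution, chosen so that the expected length stays finite and nearly unchanged while the risk under the witness $q$ diverges. The growth condition $-\log q_k \geq l_k^2 - C$ from Lemma~\ref{lem-witness} means that any distribution $s$ with $\sum_k s_k l_k < \infty$ but $\sum_k s_k l_k^2 = \infty$ satisfies $R(s,q) = \infty$, because $R(s,q) \geq \sum_k s_k l_k^2 - C$.

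First I fix such an $s$. Put mass on one sequence of each length $l \geq 1$, with weight proportional to $l^{-3}$. Then $\sum_l l \cdot l^{-3} < \infty$ while $\sum_l l^2 \cdot l^{-3} = \infty$, so $s \in \Pcal_{<\infty}$ and $R(s,q) = \infty$. Write $\mu := \mathbb{E}_s|X| < \infty$.

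Next I set $p'' := (1-\gamma)p + \gamma s$ for small $\gamma \in (0,1)$. Three properties follow directly.
\begin{itemize}
\item $\dTV(p,p'') \leq \gamma$, so taking $\gamma < \epsilon$ gives closeness.
\item $\mathbb{E}_{p''}|X| = (1-\gamma)\mathbb{E}_p|X| + \gamma\mu \leq \mathbb{E}_p|X| + \gamma\mu$, so additionally taking $\gamma \leq \eta/\mu$ gives the length bound. In particular $p'' \in \Pcal_{<\infty}$.
\item Since all the summands are nonnegative, $R(p'',q) \geq \gamma R(s,q) = \infty$, hence $p'' \in H_1$.
\end{itemize}
Therefore any $\gamma < \min\{\epsilon, \eta/\mu, 1\}$ works.

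There is essentially no obstacle here. The only care needed is to check that the quadratic lower bound, with its additive constant $C$, really forces divergence. It does: $\sum_k s_k(l_k^2 - C) = \sum_k s_k l_k^2 - C = \infty$, and replacing each term by its positive part only increases the sum. The vocabulary size plays no role, since $s$ uses one sequence per length.
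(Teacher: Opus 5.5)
Your proof is correct and follows essentially the paper's route: put a small amount of mass on a length profile with finite first moment but infinite second moment, so that the witness's quadratic loss growth makes the risk diverge while the expected length rises by at most $\gamma\,\mathbb{E}_s|X|$. The differences are cosmetic---you mix $p$ uniformly rather than depleting a single atom $p_{i_0}$, you place the mass on one sequence per length rather than spreading it over all $V^{\,l-1}$ of them, and you use the tail $l^{-3}$ where the paper uses $1/(l^2(\log l)^2)$---and if anything the mixture makes the domination $p''_k \geq \gamma s_k$ immediate.
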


\begin{proof}
The perturbation moves a little mass onto long sequences, spreading the share allotted to each length evenly among the sequences of that length. Put
\[
w_k \;:=\; \frac{c}{V^{\,l_k-1}\,l_k^2(\log l_k)^2} \quad (l_k \geq 3), \qquad w_k := 0 \ \text{ otherwise},
\]
with $c$ normalizing $\sum_k w_k = 1$, which is possible because the $V^{\,l-1}$ sequences of length $l$ contribute $1/[l^2(\log l)^2]$ between them and $\sum_{l \geq 3} 1/[l^2(\log l)^2] < \infty$. The multiplicity cancels in every sum below, so the weights behave as functions of length alone. Two properties matter. First,
\[
\bar{l} \;:=\; \sum_k w_k l_k \;=\; c\sum_{l \geq 3}\frac{1}{l(\log l)^2} \;<\; \infty ,
\]
the series converging because $\sum_{l \geq 3} 1/[l(\log l)^\gamma]$ converges for $\gamma > 1$; and $\bar{l} > 0$ since every $l_k \geq 1$. Second, since $q$ is a witness,
\[
\sum_k w_k(-\log q_k) \;\geq\; c\sum_{l\geq 3}\frac{l^2-C}{l^2(\log l)^2} \;=\; c\sum_{l \geq 3}\frac{1}{(\log l)^2} \;-\; cC\sum_{l\geq3}\frac{1}{l^2(\log l)^2} \;=\; \infty ,
\]
the first series diverging because $1/(\log l)^2$ eventually exceeds $1/l$ and the second converging. The termwise bound is legitimate even though $l^2 - C$ is negative for small $l$: the left-hand side has nonnegative terms throughout, since every $-\log q_k \geq 0$, and on the right the negative part contributes at most $cC\sum_l 1/[l^2(\log l)^2] < \infty$, so the right-hand side is unambiguously $+\infty$. So the weights are summable, and cheap in expected length, while their contribution to the risk is not---which is the whole mechanism: a budget on expected length meters $l$, whereas the risk of a witness grows like $l^2$.

Fix any $i_0$ with $p_{i_0} > 0$, put $\mu := \tfrac12\min\{\epsilon,\ \eta/\bar{l},\ p_{i_0}\} > 0$, and set
\[
p'' \;:=\; p \;-\; \mu\,\delta_{i_0} \;+\; \mu\sum_k w_k\,\delta_{x_k} .
\]
Then $p''$ is a probability distribution, and since $\mu \leq p_{i_0}$ it dominates the perturbation coordinatewise,
\begin{equation}\label{eq-domination}
p''_k \;\geq\; \mu w_k \quad\text{for every } k ,
\end{equation}
which is immediate for $k \neq i_0$ and reads $p_{i_0} - \mu + \mu w_{i_0} \geq \mu w_{i_0}$ at $k = i_0$. Moreover, $\dTV(p,p'') \leq \mu < \epsilon$ and $\mathbb{E}_{p''}|X| \leq \mathbb{E}_p|X| + \mu\bar{l} \leq \mathbb{E}_p|X| + \eta$. Finally, as $-\log q_k \geq 0$ for every $k$, \eqref{eq-domination} gives
\[
R(p'',q) \;=\; \sum_k p''_k(-\log q_k) \;\geq\; \mu\sum_k w_k(-\log q_k) \;=\; \infty ,
\]
so $p'' \in H_1$.
\end{proof}

The two denseness lemmas were stated for a fixed witness $q$ and a varying $p$. Combining them with the density of the witnesses gives density in both coordinates at once---the fact announced informally at the end of Section~\ref{sec-setting}.

\begin{proposition}\label{prop-dense}
In the space $\Pcal_{<\infty} \times \Qcal$ of states at which $\rho$ is defined, metrized by $\max\{\dTV(p,p'),\dTV(q,q')\}$, the set of states at which $\rho(p,q) = \infty$ is dense, and so is the set of states at which $\rho(p,q) < \infty$. Both remain dense when the second coordinate is confined to the full-support models $\Qcal_{>0}$.
\end{proposition}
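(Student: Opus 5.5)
Given a state $(p^\circ,q^\circ)\in\Pcal_{<\infty}\times\Qcal$ and $\epsilon>0$, I will produce two nearby states: one with $\rho=\infty$ and one with $\rho<\infty$, each within $\epsilon$ in the max-metric. I perturb the model first, then the data distribution. The first step is Lemma~\ref{lem-witness}, which gives a witness $q$ with $\dTV(q^\circ,q)<\epsilon$. This $q$ has full support, so the same step serves the $\Qcal_{>0}$ version. Fixing this $q$ fixes the partition $H_0,H_1$ of $\Pcal_{<\infty}$.

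For infinite risk, I apply Lemma~\ref{lem-h1-fin} to $p^\circ$ with this $q$, taking any $\eta>0$, say $\eta=1$. This gives $p''\in H_1$ with $\dTV(p^\circ,p'')<\epsilon$ and $m(p'')\le m(p^\circ)+\eta<\infty$. So $p''\in\Pcal_{<\infty}$, and $\rho(p'',q)=R(p'',q)/m(p'')=\infty$ because $1\le m(p'')<\infty$. The state $(p'',q)$ is then within $\epsilon$ of $(p^\circ,q^\circ)$.

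For finite risk, I apply Lemma~\ref{lem-h0-fin} to get $p'\in H_0$ with $\dTV(p^\circ,p')<\epsilon$ and $m(p')\le m(p^\circ)<\infty$. Then $\rho(p',q)<\infty$, and $(p',q)$ is $\epsilon$-close to the original state.

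This covers both density claims, in both the full space and the restriction to $\Qcal_{>0}$. In the restricted version, the original $q^\circ$ lies in $\Qcal_{>0}$ and the witness $q$ does too.

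I see no real obstacle, since the lemmas do all the work. The only point to check is that the perturbed $p$ remains in $\Pcal_{<\infty}$, so that $\rho$ is defined there; the expected-length clauses of the two lemmas guarantee this.

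One could avoid the witness for the finite case by keeping $q^\circ$. But truncating $p$ to finitely many sequences gives finite risk only if $q^\circ$ is positive on them, which fails for models without full support. Using the witness $q$ for both cases sidesteps this.
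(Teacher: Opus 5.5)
Your proof is correct and follows the paper's argument. You pick a witness near the model via Lemma~\ref{lem-witness}; it has full support, which also handles the $\Qcal_{>0}$ case. You then apply Lemmas~\ref{lem-h1-fin} and~\ref{lem-h0-fin} at the given data distribution to land in each hypothesis while staying inside $\Pcal_{<\infty}$. The only difference is cosmetic: the paper phrases density through basic open sets $U \times V$ of the product topology rather than $\epsilon$-balls of the max metric, which amounts to the same thing.
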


\begin{proof}
Let $U \times V$ be a nonempty relatively open subset of $\Pcal_{<\infty} \times \Qcal$; sets of this form are a base for the product topology, so it suffices to find in $U \times V$ one pair of infinite and one pair of finite per-token risk. Recall from Section~\ref{sec-setting} that $\rho(p,q) = R(p,q)/m(p)$ with $1 \leq m(p) < \infty$ throughout $\Pcal_{<\infty}$, so $\rho(p,q) = \infty$ exactly when $R(p,q) = \infty$; it is enough to argue about $R$.

By Lemma~\ref{lem-witness} the witnesses are dense in $\Qcal$, so $V$ contains a witness $q$; fix it. Since $U$ is nonempty, pick $p^\circ \in U$ and $\epsilon > 0$ with $B_\epsilon(p^\circ) \cap \Pcal_{<\infty} \subseteq U$. Lemma~\ref{lem-h1-fin}, applied at $p^\circ$ with radius $\epsilon$ and any budget, supplies $p'' \in U$ with $R(p'',q) = \infty$; Lemma~\ref{lem-h0-fin}, applied at $p^\circ$ with radius $\epsilon$, supplies $p' \in U$ with $R(p',q) < \infty$. Then $(p'',q)$ and $(p',q)$ both lie in $U \times V$, as required.

For the final claim, note that the witness $q$ just used has full support and so lies in $\Qcal_{>0}$; the same two pairs therefore serve when $V$ is relatively open in $\Qcal_{>0}$ rather than in $\Qcal$.
\end{proof}

\subsection{No Test Succeeds on \texorpdfstring{$\Pcal_{<L}$}{P<L}}\label{sec-relative}

The two denseness lemmas say that neither hypothesis is anywhere locally settled: a test driven toward one verdict can always be driven back. The construction below exploits this, forcing a test to vacillate forever. One difficulty must be handled first. The construction produces its inconsistency point as a limit, and while $\Pcal$ is complete, the class $\Pcal_{<L}$ is not closed---truncating a distribution of infinite expected length at longer and longer sequences gives finitely supported members of the class converging to a non-member. What rules out an escape in the limit is that expected length cannot jump upward there (Lemma~\ref{lem-lsc}): if the construction keeps the expected length under a fixed ceiling at every stage, the limit stays under it too. That is what the budget in the proof arranges, and Lemma~\ref{lem-h1-fin}'s parameter $\eta$ is what makes the budget affordable.

\begin{proposition}\label{prop-relative}
Let $q$ be a witness and let $L > 1$. Then every test of $H_0$ against $H_1$ is densely inconsistent with respect to $\Pcal_{<L}$.
\end{proposition}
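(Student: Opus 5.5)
Fix a test $\phi$, a point $p^\circ \in \Pcal_{<L}$ and a radius $\epsilon > 0$. The goal is a distribution $p^* \in B_\epsilon(p^\circ) \cap \Pcal_{<L}$ at which $\phi$ is inconsistent. The plan is a Bahadur--Savage style diagonal construction. We build a Cauchy sequence $p^{(0)}, p^{(1)}, \ldots$ that alternates between $H_0$ and $H_1$. Each step is chosen, using Lemma~\ref{lem-continuous}, so that along a subsequence $a_{n_j}(p^*)$ stays above $2/3$ at even stages and below $1/3$ at odd stages. The limit $p^*$ then has $a_n(p^*)$ failing to converge to either $0$ or $1$, and this is inconsistency whichever of $H_0,H_1$ the limit lands in. If at some stage $\phi$ already fails to converge at the current $p^{(j)}$, we stop and take $p^* := p^{(j)}$.

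The budget comes first. Put $M := m(p^\circ) < L$ and $\eta_j := (L-M)2^{-j-2}$, so that $\sum_j \eta_j < L - M$. Start with $p^{(0)} \in H_0$ given by Lemma~\ref{lem-h0-fin} at $p^\circ$. It lies within $\epsilon/4$ of $p^\circ$ and has $m(p^{(0)}) \leq M$.

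The inductive step runs as follows. Suppose $p^{(j)}$ has been chosen with $m(p^{(j)}) \leq M + \sum_{i<j}\eta_i$, with the radii $r_0 > r_1 > \cdots$ defined below, and with $\phi$ consistent at $p^{(j)}$.
\begin{enumerate}
\item Pick $n_j > n_{j-1}$ with $a_{n_j}(p^{(j)}) > 5/6$ if $p^{(j)} \in H_0$, or $a_{n_j}(p^{(j)}) < 1/6$ if $p^{(j)} \in H_1$.
\item Set $r_j := \min\{r_{j-1}/4,\ 1/(6n_j)\}$, with $r_{-1} := \epsilon$. By Lemma~\ref{lem-continuous}, every $p \in \overline{B}_{2r_j}(p^{(j)})$ satisfies $|a_{n_j}(p) - a_{n_j}(p^{(j)})| \leq 1/3$. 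Hence $a_{n_j}(p) > 1/2$ throughout this ball in the $H_0$ case, and $a_{n_j}(p) < 1/2$ throughout it in the $H_1$ case. (Strict inequalities, so the balls can be shrunk slightly if needed.)
\item Take $p^{(j+1)}$ in the opposite hypothesis with $\dTV(p^{(j)},p^{(j+1)}) < r_j$. If $p^{(j)} \in H_0$, use Lemma~\ref{lem-h1-fin} with $\eta := \eta_j$. If $p^{(j)} \in H_1$, use Lemma~\ref{lem-h0-fin}, which costs no length.
\end{enumerate}

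Because the radii shrink geometrically, the sequence is Cauchy, and $\Pcal$ is complete under $\dTV$ (it is a closed subset of $\ell^1$). So the sequence converges to some $p^*$. Moreover $\dTV(p^{(j)},p^*) \leq \sum_{i\geq j} r_i \leq 2r_j$, so $p^*$ lies in every closed ball $\overline{B}_{2r_j}(p^{(j)})$. In particular $\dTV(p^\circ,p^*) < \epsilon$.

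The step I expect to be the main obstacle is keeping the limit inside $\Pcal_{<L}$, since that class is not closed. Here Lemma~\ref{lem-lsc} does the work. Every $p^{(j)}$ satisfies $m(p^{(j)}) \leq M + \sum_i \eta_i =: M'$, and $M' < L$ strictly. Lower semicontinuity of $m$ then gives $m(p^*) \leq M' < L$, so $p^* \in \Pcal_{<L}$ and the strict inequality is not lost in the limit.

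Finally, $a_{n_j}(p^*) > 1/2$ for all even $j$ and $a_{n_j}(p^*) < 1/2$ for all odd $j$, with $n_j \to \infty$. So $a_n(p^*)$ converges to neither $0$ nor $1$, and $\phi$ is inconsistent at $p^*$. Since $p^\circ$ and $\epsilon$ were arbitrary, the inconsistency set is dense in $\Pcal_{<L}$. The Remark extends the argument to randomized tests verbatim.
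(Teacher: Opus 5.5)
Your proposal is correct and follows essentially the same route as the paper's proof: alternate between $H_0$ and $H_1$ via Lemmas~\ref{lem-h0-fin} and~\ref{lem-h1-fin}, freeze each verdict on a shrinking ball using the $n$-Lipschitz bound of Lemma~\ref{lem-continuous}, pay for the $H_1$ steps from a geometric length budget, and pass to the limit by completeness of $\Pcal$, with Lemma~\ref{lem-lsc} keeping the limit in $\Pcal_{<L}$. The differences are cosmetic: you stop early at an inconsistent $p^{(j)}$ instead of arguing by reductio (the stopping rule should say ``$\phi$ is inconsistent at $p^{(j)}$'', not ``fails to converge''), and your $1/2$ thresholds (not the $2/3$, $1/3$ of your overview) only exclude convergence to $0$ or $1$, not to $1/2$, which still suffices for inconsistency.
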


\begin{proof}
Let $\phi$ be a test of $H_0$ against $H_1$, and let $p^\circ \in \Pcal_{<L}$ and $\epsilon > 0$ be arbitrary; it suffices to exhibit a point of $\Pcal_{<L} \cap B_\epsilon(p^\circ)$ at which $\phi$ is inconsistent. Suppose for {\em reductio} that $\phi$ is consistent throughout $\Pcal_{<L} \cap B_\epsilon(p^\circ)$. Set the budget
\[
\eta_0 \;:=\; \tfrac12\big(L - \mathbb{E}_{p^\circ}|X|\big) \;>\; 0 ,
\]
and write $a_n(p) := \PP_{p,n}(\phi_n = 1)$ as in Definition~\ref{def-test}.

\medskip\noindent\textbf{Stage $(0)$.}
By Lemma~\ref{lem-h0-fin} there is $p^{(0)} \in H_0$ with $\dTV(p^\circ,p^{(0)}) < \epsilon/2$ and $\mathbb{E}_{p^{(0)}}|X| \leq \mathbb{E}_{p^\circ}|X|$, so that $B_{\epsilon/2}(p^{(0)}) \subseteq B_\epsilon(p^\circ)$. Since $p^{(0)} \in \Pcal_{<L} \cap B_\epsilon(p^\circ)$, the reductio hypothesis makes $\phi$ consistent there, and $p^{(0)} \in H_0$ gives $a_n(p^{(0)}) \to 1$; choose $n^{(0)}$ with $a_{n^{(0)}}(p^{(0)}) > 0.91$. By Lemma~\ref{lem-continuous}, setting
\[
\epsilon^{(0)} := \min\{\epsilon/4,\ 0.01/n^{(0)}\}
\]
gives $a_{n^{(0)}}(p) > 0.9$ for all $p \in B_{\epsilon^{(0)}}(p^{(0)})$, and $\overline{B}_{\epsilon^{(0)}}(p^{(0)}) \subseteq B_{\epsilon/2}(p^{(0)}) \subseteq B_\epsilon(p^\circ)$.

\medskip\noindent\textbf{Stage $(k) \to$ stage $(k+1)$.}
Suppose stage $(k)$ is complete, yielding $(p^{(k)}, n^{(k)}, \epsilon^{(k)})$ with $p^{(k)} \in \Pcal_{<L}$ and the verdict forced throughout $B_{\epsilon^{(k)}}(p^{(k)})$. If $k$ is even, so that $p^{(k)} \in H_0$, apply Lemma~\ref{lem-h1-fin} with radius $\epsilon^{(k)}/2$ and budget $\eta := \eta_0 2^{-k-1}$ to obtain $p^{(k+1)} \in H_1$ with
\[
\dTV(p^{(k)},p^{(k+1)}) < \epsilon^{(k)}/2 , \qquad \mathbb{E}_{p^{(k+1)}}|X| \leq \mathbb{E}_{p^{(k)}}|X| + \eta_0 2^{-k-1} .
\]
If $k$ is odd, apply Lemma~\ref{lem-h0-fin} instead, which costs nothing in expected length. Either way $p^{(k+1)}$ lies in $\Pcal_{<L} \cap B_\epsilon(p^\circ)$---the expected-length bookkeeping is verified below---so the reductio hypothesis applies and $\phi$ is consistent at it. Choose $n^{(k+1)} > n^{(k)}$ with $a_{n^{(k+1)}}(p^{(k+1)}) < 0.09$ in the even case, or $> 0.91$ in the odd case, and set
\[
\epsilon^{(k+1)} := \min\{\epsilon^{(k)}/4,\ 0.01/n^{(k+1)}\} ,
\]
so that by Lemma~\ref{lem-continuous} the corresponding verdict is forced---$a_{n^{(k+1)}} < 0.1$, resp. $> 0.9$---throughout $B_{\epsilon^{(k+1)}}(p^{(k+1)})$. Since the perturbation was smaller than $\epsilon^{(k)}/2$ and the new radius is at most $\epsilon^{(k)}/4$, every $p \in \overline{B}_{\epsilon^{(k+1)}}(p^{(k+1)})$ has $\dTV(p,p^{(k)}) < \epsilon^{(k)}$, whence
\[
\overline{B}_{\epsilon^{(k+1)}}(p^{(k+1)}) \;\subseteq\; B_{\epsilon^{(k)}}(p^{(k)}) .
\]

\medskip\noindent\textbf{The budget.}
Only the even stages spend, and they spend $\eta_0 2^{-k-1}$; the odd stages spend nothing. Hence, for every $k$,
\[
\mathbb{E}_{p^{(k)}}|X| \;\leq\; \mathbb{E}_{p^{\circ}}|X| + \eta_0\sum_{j \geq 0} 2^{-j-1} \;=\; \mathbb{E}_{p^\circ}|X| + \eta_0 \;<\; L ,
\]
so every $p^{(k)}$ lies in $\Pcal_{<L}$, as the construction assumed.

\medskip\noindent\textbf{The limit.}
The radii satisfy $\epsilon^{(k+1)} \leq \epsilon^{(k)}/4$, so $\dTV(p^{(k)},p^{(k+1)}) < \epsilon^{(k)}/2 \leq 4^{-k}\epsilon^{(0)}/2$, which is summable; hence $(p^{(k)})$ is Cauchy and converges to some $p^*$, the space $\Pcal$ being complete under $\dTV$.\footnote{Indeed $\dTV = \tfrac12\|\cdot\|_1$ and $\Pcal$ is a closed subset of $\ell^1(\X)$, cut out by the closed conditions $p_k \geq 0$ and $\sum_k p_k = 1$.} By the nesting, $p^*$ lies in $\overline{B}_{\epsilon^{(k+1)}}(p^{(k+1)}) \subseteq B_{\epsilon^{(k)}}(p^{(k)})$ for every $k$, and in particular in $B_{\epsilon^{(0)}}(p^{(0)}) \subseteq B_\epsilon(p^\circ)$. By Lemma~\ref{lem-lsc} and the uniform bound just established, $\mathbb{E}_{p^*}|X| \leq \mathbb{E}_{p^\circ}|X| + \eta_0 < L$, so $p^* \in \Pcal_{<L} \cap B_\epsilon(p^\circ)$.

Finally, $p^*$ lies in every ball of the construction, so $a_{n^{(k)}}(p^*) > 0.9$ for even $k$ and $< 0.1$ for odd $k$, with $n^{(k)}$ strictly increasing. So $(a_n(p^*))_{n \geq 1}$ has no limit. But consistency at $p^*$ requires it to converge---to $1$ if $p^* \in H_0$, to $0$ if $p^* \in H_1$, and one of these holds, though the proof never needs to determine which. So $\phi$ is inconsistent at $p^* \in \Pcal_{<L} \cap B_\epsilon(p^\circ)$, contradicting the reductio hypothesis.

\medskip\noindent\textbf{The unbudgeted variant.}
The budget is needed only to keep the construction inside a class that is not closed. Suppose instead that the class in question is all of $\Pcal$, and that the two hypotheses partitioning it are each dense in it. Then the construction above runs with every reference to the budget deleted: at each stage the nearby distribution of the required kind is drawn from whichever density fact supplies it, with no constraint on its expected length; the paragraph \textbf{The budget} is dropped entire; and in \textbf{The limit} the appeal to Lemma~\ref{lem-lsc} is dropped as well, since $p^* \in \Pcal$ already by the completeness of $\Pcal$, which is the only membership now required. Nothing else changes---the choice of $n^{(k)}$, the radii, the nesting, and the vacillation are untouched, none of them mentioning the budgeted quantity. So every test between two complementary hypotheses that are each dense in $\Pcal$ is densely inconsistent with respect to $\Pcal$.
\end{proof}

Nothing in either version of the construction uses any property of $H_0$ and $H_1$ beyond their partitioning the class and each meeting every neighborhood of every member of it---together, in the budgeted version, with the affordability of the witnesses and the lower semicontinuity of the budgeted quantity. The following instance of the unbudgeted version is appealed to more than once in the body.

Say that a test $\psi = (\psi_n)_{n \geq 1}$, with $\psi_n : \X^n \to \{0,1\}$, is a \textbf{test of finite expected length} iff it is intended to decide between $\Pcal_{<\infty}$ and its complement in $\Pcal$; it is \textbf{consistent at} $p$ iff $\PP_{p,n}(\psi_n = 1) \to 1$ when $m(p) < \infty$ and $\to 0$ when $m(p) = \infty$.

\begin{proposition}\label{prop-untestable}
Every test of finite expected length is densely inconsistent with respect to $\Pcal$: the set of distributions at which it is inconsistent is dense in $\Pcal$.
\end{proposition}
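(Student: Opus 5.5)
The plan is to apply the unbudgeted variant of Proposition~\ref{prop-relative}. That variant says that every test between two complementary hypotheses, each dense in $\Pcal$, is densely inconsistent with respect to $\Pcal$. So it suffices to show that $\Pcal_{<\infty}$ and its complement $\Pcal_{=\infty} := \{p \in \Pcal : m(p) = \infty\}$ are each dense in $\Pcal$ under $\dTV$. Once that is done, the diagonal construction runs as before: alternating stages, a strictly increasing sequence of sample sizes $n^{(k)}$, radii shrinking by factors of $4$ and of $0.01/n^{(k)}$, and a Cauchy limit $p^*$ in the complete space $\Pcal$. By Lemma~\ref{lem-continuous}, the acceptance probabilities $\PP_{p^*,n^{(k)}}(\psi_{n^{(k)}} = 1)$ alternate above $0.9$ and below $0.1$, so $\psi$ is inconsistent at $p^*$, which lies in any prescribed ball.

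Density of $\Pcal_{<\infty}$ follows from the truncation in Lemma~\ref{lem-h0-fin}, which works for arbitrary $p$. Given $p \in \Pcal$ and $\epsilon > 0$, choose $N$ with $\sum_{k>N} p_k < \epsilon$ and move that tail mass onto $x_1$. The result $p'$ is finitely supported, so $m(p') < \infty$, and $\dTV(p,p') < \epsilon$. Nothing here used $m(p) < \infty$.

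Density of $\Pcal_{=\infty}$ is a mirror of Lemma~\ref{lem-h1-fin}, but now with $|X|$ as the functional instead of the loss of a witness. Fix weights $w_k := c\,/\bigl(V^{\,l_k-1} l_k^2\bigr)$, normalized to sum to $1$. Summing over the $V^{\,l-1}$ sequences of each length, $\sum_k w_k l_k = c\sum_l 1/l = \infty$, so the distribution $w$ has infinite expected length. Given $p$ and $\epsilon$, put $\mu := \tfrac12\min\{\epsilon, p_{i_0}\}$ for some $i_0$ with $p_{i_0} > 0$, and set $p'' := p - \mu\delta_{i_0} + \mu\sum_k w_k\delta_{x_k}$. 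Then $\dTV(p,p'') \le \mu < \epsilon$. As in \eqref{eq-domination}, $p''_k \ge \mu w_k$ for all $k$, hence $m(p'') \ge \mu\sum_k w_k l_k = \infty$. Alternatively, the mixture $(1-\mu)p + \mu w$ also works and needs no choice of $i_0$.

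There is no real obstacle. The only point needing care is the variant's claim that the construction needs nothing beyond the two hypotheses being complementary and each dense. In the present setting every stage is drawn from one of the two density facts just proved, with no budget, and the limit $p^*$ belongs to $\Pcal$ by completeness. So I would simply cite the unbudgeted variant rather than rerun the construction, checking only that the test $\psi$ is a deterministic test of the form in Definition~\ref{def-test}. For the variant this means $\{0,1\}$-valued $\psi_n$ on $\X^n$, with $H_0 := \Pcal_{<\infty}$ playing the role of the ``accept'' hypothesis.
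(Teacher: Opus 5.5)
Your proof is correct and follows the paper's argument. You cite the unbudgeted variant of Proposition~\ref{prop-relative}, obtain density of $\Pcal_{<\infty}$ by truncating the tail onto $x_1$, and obtain density of the complement by the additive perturbation of Lemma~\ref{lem-h1-fin} with the domination $p''_k \geq \mu w_k$. The only difference is cosmetic: your weights $c/(V^{\,l_k-1}l_k^2)$ are a slightly simpler choice than the paper's $c/(V^{\,l_k-1}l_k^2\log l_k)$ (the latter restricted to $l_k \geq 3$), and both give a summable $w$ with $\sum_k w_k l_k = \infty$.
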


\begin{proof}
The two hypotheses partition $\Pcal$, so by the unbudgeted variant established at the end of the proof of Proposition~\ref{prop-relative} it suffices to check that each of them is dense in $\Pcal$.

That $\Pcal_{<\infty}$ is dense is witnessed by truncation: for $p \in \Pcal$ and $\epsilon > 0$, choose $N$ with $\sum_{k>N} p_k < \epsilon$ and move that tail onto $x_1$, obtaining a finitely supported $p'$ within $\epsilon$ of $p$, and finitely supported distributions have finite expected length.

That the complement is dense is witnessed by the perturbation of Lemma~\ref{lem-h1-fin}, run with the weights
\[
w_k \;:=\; \frac{c}{V^{\,l_k-1}\,l_k^2 \log l_k} \qquad (l_k \geq 3),
\]
and with the budget clause dropped, no bound on expected length being wanted here. These are summable, since the $V^{\,l-1}$ sequences of length $l$ contribute $c/[l^2\log l]$ between them and $\sum_{l \geq 3} 1/[l^2\log l] < \infty$; so for $\mu \leq p_{i_0}$ the perturbed $p''$ is a distribution within $\mu$ of $p$ in total variation. But its expected length is infinite: by the coordinatewise domination $p''_k \geq \mu w_k$ of \eqref{eq-domination},
\[
m(p'') \;\geq\; \mu \sum_k w_k l_k \;=\; \mu c \sum_{l \geq 3} \frac{1}{l\log l} \;=\; \infty .
\]
Taking $\mu$ as small as one likes places such a distribution in every ball, so the complement is dense.

Both hypotheses being dense, the unbudgeted variant applies and delivers, in every ball, a distribution at which the test fails to converge.
\end{proof}

\subsection{Proofs of the Theorems}\label{sec-proof-second}

Only Theorem~\ref{thm-third} requires an argument; the other two follow from it by inspection.

\begin{proof}[Proof of Theorem \ref{thm-third}]
Let $L > 1$, and let $U \times V$ be a nonempty relatively open subset of $\Pcal_{<L} \times \Qcal_{>0}$, where $U \subseteq \Pcal_{<L}$ and $V \subseteq \Qcal_{>0}$ are nonempty and relatively open; sets of this form are a base for the product topology, so it suffices to find a state in $U \times V$ at which the estimator $\rhohat$ is inconsistent.

By Lemma~\ref{lem-witness} the witnesses are dense in $\Qcal_{>0}$, so there is a witness $q \in V$. Fix it, and let $\phi$ be the test induced by $\rhohat$ and this $q$ as in Lemma~\ref{lem-reduction}. By Proposition~\ref{prop-relative}, $\phi$ is densely inconsistent with respect to $\Pcal_{<L}$, so there is $p \in U$ at which $\phi$ is inconsistent. By Lemma~\ref{lem-reduction}, $\rhohat$ is not consistent at $(p,q)$. Since $(p,q) \in U \times V$, we are done.
\end{proof}

\begin{proof}[Proof of Theorem \ref{thm-second}]
The state space $\Pcal_{<L} \times \Qcal_{>0}$ is nonempty---it contains, for instance, the distribution concentrated on a shortest sequence paired with any witness. So by Theorem~\ref{thm-third} the set of states in it at which $\rhohat$ is inconsistent is dense in a nonempty space, hence nonempty; and an estimator inconsistent at some state of a space is not pointwise consistent with respect to it.
\end{proof}

\begin{proof}[Proof of Theorem \ref{thm-first}]
Fix any $L > 1$. By Theorem~\ref{thm-second} there is a state $(p,q) \in \Pcal_{<L} \times \Qcal_{>0}$ at which $\rhohat$ is inconsistent. Since $\Pcal_{<L} \subseteq \Pcal_{<\infty}$ and $\Qcal_{>0} \subseteq \Qcal$, that same state lies in $\Pcal_{<\infty} \times \Qcal$, so $\rhohat$ is not pointwise consistent with respect to it either.
\end{proof}

\section{Two Details of the Setting}\label{sec-setting-detail}

\subsection{The Per-Token Risk from Autoregressive Primitives}\label{sec-primitives}

Section~\ref{sec-setting} defines the per-token risk as a quotient, $\rho(p,q) = R(p,q)/m(p)$, and observes that it is defined only when $m(p) < \infty$. That presentation is chosen for brevity. This subsection takes the autoregressive conditionals as the primitive object instead, defines the per-token risk directly as an expectation over prediction events, and shows two things: that this direct definition makes sense exactly when $m(p) < \infty$, and that where it makes sense it agrees with the quotient. So nothing is lost by the brevity, and the restriction on $m(p)$ is not an artifact of the shorter route.

\paragraph{The primitive object.}
A \emph{context}---what Section~\ref{sec-setting} calls a prefix, written $x_{<t}$ there---is a finite string of ordinary tokens, that is, an element of $\mathcal{V}_0^*$, the empty string $\varnothing$ included. Throughout this subsection $x$ ranges over contexts and $z$ over complete sequences, which reverses the usage of Section~\ref{sec-setting}, where $x$ denoted a complete sequence. An autoregressive data-generating process specifies, for each context $x$, a distribution $p(\,\cdot \mid x)$ on the full vocabulary $\mathcal{V}$; the case $x = \varnothing$ gives the distribution of the first token. Define the \textbf{reach probability} of a context $x = x_1\cdots x_n$ by
\[
\pi(x) \;:=\; \prod_{j=1}^{n} p(x_j \mid x_{<j}) , \qquad \pi(\varnothing) := 1 ,
\]
the probability that the process produces $x$ and has not yet halted. Assuming the process emits $\eos$ eventually with probability one, the induced assignment $p(z) := \prod_{t=1}^{|z|} p(z_t \mid z_{<t})$ for $z \in \X$ is the distribution over complete sequences that Section~\ref{sec-setting} takes as given.

\paragraph{Prediction events.}
For next-token prediction the natural sample space is not $\X$ but the set of \emph{prediction events}, $\mathcal{V}_0^* \times \mathcal{V}$, a point $(x,y)$ recording a context together with the token to be predicted there. Which distributions $\nu$ on that space represent the same process? Two conditions are forced. \emph{Kernel agreement} requires $\nu(x,y) = \nu_X(x)\,p(y \mid x)$, where $\nu_X$ is the marginal on contexts: conditionally on the context, the next token is distributed as the process says. \emph{Prefix consistency} requires $\nu_X(xa) = \nu(x,a)$ for every context $x$ and ordinary token $a$: meeting the context $xa$ is the same event as meeting $x$ and then seeing $a$.

\begin{lemma}[Existence of the prediction-event distribution]\label{lem-nu}
A distribution $\nu$ on $\mathcal{V}_0^* \times \mathcal{V}$ satisfying kernel agreement and prefix consistency exists if and only if $m(p) < \infty$. When it exists it is unique, and is given by
\[
\nu(x,y) \;=\; \frac{\pi(x)\,p(y \mid x)}{m(p)} .
\]
\end{lemma}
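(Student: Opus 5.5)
The plan is to show that the two constraints force $\nu$ to be proportional to $\pi(x)\,p(y\mid x)$, and then read off that the proportionality constant must be $1/m(p)$. Let $\nu$ be any distribution satisfying kernel agreement and prefix consistency, and write $c := \nu_X(\varnothing)$. First I will show by induction on context length that $\nu_X(x) = c\,\pi(x)$ for every context $x$. The base case is the definition of $c$. For the step, combine prefix consistency and kernel agreement to get
\[
\nu_X(xa) = \nu(x,a) = \nu_X(x)\,p(a \mid x) = c\,\pi(x)\,p(a\mid x) = c\,\pi(xa).
\]
Kernel agreement then gives $\nu(x,y) = c\,\pi(x)\,p(y\mid x)$ on all of $\mathcal{V}_0^*\times\mathcal{V}$.

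Next I impose normalization. Summing over $y$ gives $\sum_y \nu(x,y) = c\,\pi(x)$, so the total mass is $c\sum_{x\in\mathcal{V}_0^*}\pi(x)$. The key identity is $\sum_{x}\pi(x) = m(p)$. To prove it, note that $\pi(x)$ is the probability that the generated complete sequence $Z$ has $x$ as a proper prefix, meaning $x$ is followed in $Z$ by at least one more token, possibly $\eos$. Formally, $\pi(x) = \sum_{z \in \X:\, z \text{ extends } x} p(z)$. This holds because the process halts almost surely, which is part of the standing assumption: the mass of sequences extending $x$ equals $\pi(x)$ minus the probability of never halting after $x$, and the latter is zero. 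A complete sequence $z$ of length $|z|$ has exactly $|z|$ such prefixes, namely those of lengths $0,\dots,|z|-1$. Exchanging the order of summation by Tonelli, since all terms are nonnegative, gives
\[
\sum_x \pi(x) = \sum_z p(z)\,|z| = m(p).
\]

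With this identity the conclusion splits into two cases. If $m(p)=\infty$, the total mass $c\cdot\infty$ can equal $1$ for no $c\ge 0$: $c=0$ gives mass $0$, and $c>0$ gives infinite mass. So no $\nu$ exists. If $m(p)<\infty$, normalization forces $c = 1/m(p)$, which gives uniqueness and the stated formula. For existence, I define $\nu$ by that formula and check the three requirements. It is nonnegative with total mass $m(p)/m(p)=1$. Its context marginal is $\nu_X(x)=\pi(x)/m(p)$, so kernel agreement holds. Prefix consistency holds because $\nu(x,a) = \pi(x)p(a\mid x)/m(p) = \pi(xa)/m(p) = \nu_X(xa)$.

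I expect the main obstacle to be the identity $\sum_x\pi(x)=m(p)$, specifically the step $\pi(x)=\Pr(Z \text{ extends } x)$. This is where almost-sure halting is needed: without it, $\pi$ could carry mass on infinite continuations and the identity would fail. The rest is bookkeeping. I would state that step carefully via countable additivity over the halting times after $x$.
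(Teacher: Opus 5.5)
Your proof is correct and follows the paper's argument: induction to get $\nu_X(x) = \nu_X(\varnothing)\,\pi(x)$, then normalization through the identity $\sum_{x} \pi(x) = m(p)$. The only difference is cosmetic: you prove that identity by counting the $|z|$ proper prefixes of each complete sequence (Tonelli), whereas the paper groups contexts by length and applies the tail-sum formula $\sum_{n \geq 0}\PP_p(|X|>n) = \mathbb{E}_p|X|$, which is the same double count. You also spell out the existence check and the use of almost-sure halting, both of which the paper leaves implicit.
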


\begin{proof}
Combining the two conditions gives $\nu_X(xa) = \nu_X(x)p(a \mid x)$, so by induction on length $\nu_X(x) = \nu_X(\varnothing)\,\pi(x)$: the whole context marginal is determined by the single scalar $\nu_X(\varnothing)$. Normalizing therefore requires
\[
1 \;=\; \sum_{x \in \mathcal{V}_0^*} \nu_X(x) \;=\; \nu_X(\varnothing) \sum_{x \in \mathcal{V}_0^*} \pi(x) ,
\]
and it remains to identify the second sum. For each $n$, the contexts of length $n$ are the ways the process can produce $n$ ordinary tokens without halting, so
\[
\sum_{x \in \mathcal{V}_0^n} \pi(x) \;=\; \PP_p\big(|X| > n\big) ,
\]
since a complete sequence with $n$ ordinary tokens has length $n+1$. Summing over $n \geq 0$ and using the tail formula for the mean of a nonnegative integer-valued variable,
\[
\sum_{x \in \mathcal{V}_0^*} \pi(x) \;=\; \sum_{n \geq 0} \PP_p\big(|X| > n\big) \;=\; \mathbb{E}_p|X| \;=\; m(p) .
\]
So normalization is possible exactly when $m(p) < \infty$, and then forces $\nu_X(\varnothing) = 1/m(p)$, whence the displayed formula by kernel agreement.
\end{proof}

That the condition has content---that almost-sure termination does not already secure it---is shown by an example. Let $\mathcal{V}_0 = \{a\}$ and set $p(a \mid a^{k-1}) = k/(k+1)$, so that $p(\eos \mid a^{k-1}) = 1/(k+1)$. Then $\PP_p(|X| > k) = 1/(k+1)$, so the process halts with probability one and $p$ is a genuine distribution on $\X$; but $m(p) = \sum_{k \geq 0} 1/(k+1) = \infty$, and by Lemma~\ref{lem-nu} no prediction-event distribution exists.

\paragraph{Agreement with the definition in the body.}
With $\nu$ in hand, the per-token risk of a model $q$ has a direct definition as the expected next-token loss under it, and this is the quantity Section~\ref{sec-setting} arrives at by a quotient.

\begin{proposition}\label{prop-primitives}
Suppose $m(p) < \infty$, and let $\nu$ be as in Lemma~\ref{lem-nu}. Then for every model $q$,
\[
\mathbb{E}_{(X,Y) \sim \nu}\big[-\log q(Y \mid X)\big] \;=\; \frac{R(p,q)}{m(p)} \;=\; \rho(p,q) ,
\]
the identity holding in $[0,\infty]$, both sides being infinite together.
\end{proposition}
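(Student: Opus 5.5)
The plan is to expand the left side using the explicit formula for $\nu$ from Lemma~\ref{lem-nu}, and then regroup the resulting double sum over prediction events into a sum over complete sequences. By Lemma~\ref{lem-nu},
\[
\mathbb{E}_{\nu}\big[-\log q(Y \mid X)\big] \;=\; \frac{1}{m(p)} \sum_{x \in \mathcal{V}_0^*} \sum_{y \in \mathcal{V}} \pi(x)\,p(y \mid x)\,\big(-\log q(y \mid x)\big).
\]
Every term is nonnegative, valued in $[0,\infty]$ (with $q(y \mid x)=0$ giving $+\infty$, and the convention $0\cdot\infty=0$ where $\pi(x)p(y\mid x)=0$). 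So Tonelli's theorem for nonnegative series lets us rearrange freely, and both sides will be infinite together automatically. It therefore suffices to show that
\[
\sum_{x,y} \pi(x)\,p(y\mid x)\,\big(-\log q(y\mid x)\big) \;=\; \sum_{z \in \X} p(z)\,\big(-\log q(z)\big) \;=\; R(p,q).
\]

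To show this, I would expand the right side using the autoregressive factorization, $-\log q(z) = \sum_{t=1}^{|z|} -\log q(z_t \mid z_{<t})$. This turns $R(p,q)$ into a sum over positions $(z,t)$ with $1 \le t \le |z|$. I then regroup it by the prediction event $(x,y) := (z_{<t}, z_t)$, where $x \in \mathcal{V}_0^*$ because only the final token of $z$ is $\eos$. For a fixed $(x,y)$, the key identity is
\[
\sum_{z \,:\, z_{<t}=x,\ z_t=y} p(z) \;=\; \pi(x)\,p(y \mid x), \qquad t = |x|+1.
\]
Once this holds, the coefficient of $-\log q(y\mid x)$ on both sides agrees and the equality follows.

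The identity splits into two cases. If $y=\eos$, the only such $z$ is $x\eos$, and $p(x\eos)=\pi(x)p(\eos\mid x)$ by the product formula. If $y\in\mathcal{V}_0$, the left side is the probability that the process reaches context $xy$ without halting and then halts at some later point. This equals $\pi(xy)$ because the process emits $\eos$ eventually with probability one, which is the same standing assumption used to define $p$ on $\X$ in Lemma~\ref{lem-nu}. And $\pi(xy)=\pi(x)p(y\mid x)$ by definition.

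I expect the main obstacle to be this last step, namely that the total mass of completions of $xy$ equals its reach probability. I would verify it by noting that the completions are exactly the halting events of the process conditioned on the context $xy$. That conditioned process halts almost surely whenever $\pi(xy)>0$, since otherwise the unconditioned process would fail to halt with positive probability. Alternatively, one can mirror the computation $\sum_{x\in\mathcal{V}_0^n}\pi(x)=\PP_p(|X|>n)$ already done in the proof of Lemma~\ref{lem-nu}. Finally, dividing by $m(p)\in[1,\infty)$ yields $R(p,q)/m(p)=\rho(p,q)$ and preserves finiteness and infiniteness, which matches the body's footnote computation.
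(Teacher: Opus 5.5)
Your proposal is correct and follows the paper's proof essentially step for step. You expand the left side via the formula for $\nu$ from Lemma~\ref{lem-nu}, expand $R(p,q)$ by the autoregressive factorization, and regroup the positions $(z,t)$ by the prediction event $(z_{<t},z_t)$ under Tonelli. Your case split on $y=\eos$ versus $y\in\mathcal{V}_0$, which uses almost-sure halting to get $\sum_{z \text{ extending } xy} p(z)=\pi(xy)$, spells out the weight identity that the paper asserts in a single clause.
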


\begin{proof}
All terms below are nonnegative, so every rearrangement is licensed by Tonelli's theorem and no finiteness is needed. By Lemma~\ref{lem-nu},
\[
\mathbb{E}_\nu\big[-\log q(Y\mid X)\big] \;=\; \frac{1}{m(p)} \sum_{x \in \mathcal{V}_0^*} \sum_{y \in \mathcal{V}} \pi(x)\, p(y\mid x)\,\big(-\log q(y \mid x)\big) .
\]
On the other side, expanding $-\log q(z)$ by the autoregressive factorization,
\[
R(p,q) \;=\; \sum_{z \in \X} p(z)\big(-\log q(z)\big) \;=\; \sum_{z \in \X} p(z) \sum_{t=1}^{|z|} \big(-\log q(z_t \mid z_{<t})\big) .
\]
Each term of the last double sum is indexed by a complete sequence $z$ together with a position $t$ in it, and such a pair determines the prediction event $(z_{<t}, z_t)$. Grouping the terms by that event and summing over the sequences that give rise to it, the total weight attached to $(x,y)$ is $\pi(x)\,p(y \mid x)$, since the sequences passing through context $x$ and emitting $y$ there have probabilities summing to exactly that. Hence, the two double sums coincide, and dividing by $m(p)$ gives the claim.
\end{proof}

The measure $\nu$ is the one already met in the footnote to Section~\ref{sec-setting}, transported along the correspondence $(z,t) \mapsto (z_{<t}, z_t)$ between positions in complete sequences and prediction events.

\subsection{Training by Empirical Risk Minimization}\label{sec-training-detail}

This appendix records, for completeness, the identification of empirical risk minimization under logarithmic loss with maximum likelihood, summarized in Section~\ref{sec-training}. Nothing in the results depends on it; it is included because Appendix~\ref{sec-model} appeals to it when discussing models whose next-token distributions are truncated after training.

Under the usual i.i.d.\ idealization, let $X^{(1)},\ldots,X^{(n)} \sim p$ be a training sample and let
\[
\Rhat^{\mathrm{emp}}_n(q) := -\frac{1}{n}\sum_{i=1}^n \log q\big(X^{(i)}\big)
\]
be the empirical risk of $q$. For a parameterized family $\{q_\theta : \theta \in \Theta\}$, training seeks
\[
\hat\theta \;\in\; \operatorname*{arg\,min}_{\theta \in \Theta} \Rhat^{\mathrm{emp}}_n(q_\theta) .
\]
Multiplication by the positive constant $1/n$ does not affect the optimizer, so
\[
\operatorname*{arg\,min}_{\theta \in \Theta} \Rhat^{\mathrm{emp}}_n(q_\theta) \;=\; \operatorname*{arg\,max}_{\theta \in \Theta} \sum_{i=1}^n \log q_\theta\big(X^{(i)}\big) ,
\]
which is maximum likelihood. By the autoregressive factorization,
\[
\Rhat^{\mathrm{emp}}_n(q_\theta) \;=\; -\frac{1}{n}\sum_{i=1}^n \sum_{t=1}^{|X^{(i)}|} \log q_\theta\big(X^{(i)}_t \mid X^{(i)}_{<t}\big) ,
\]
the familiar next-token cross-entropy objective. Implementations commonly average over training tokens rather than over sequences, that is, divide by $\sum_i |X^{(i)}|$ rather than by $n$. Since the total number of observed training tokens is fixed independently of $\theta$, this changes the objective by a positive multiplicative constant and therefore leaves the minimizers unchanged---so training may equivalently be described as minimizing an empirical version of the per-sequence risk $A$ or of the per-token risk $B$ of Section~\ref{sec-setting}. The distinction that matters for this paper arises only at evaluation, where the denominator is random rather than fixed.

\section{Ways Out: Details}\label{sec-more-ways-out}

Section~\ref{sec-more-out} lists the conditions that jointly produce the impossibility and says which of them are treated where. This appendix takes six of them in turn.

\subsection{Sequence Space: Bound the Length}\label{sec-sequence}

Cap the length of a sequence at some $N$ and $\X$ becomes finite. The impossibility then evaporates completely, and not merely because the proof breaks: $R(p,q) \leq \max_{|x| \leq N}(-\log q(x)) < \infty$ for every $p$, so the holdout estimator is not just pointwise consistent but \emph{uniformly} consistent, at Hoeffding rates, across all distributions on the truncated space. This is the only exit that buys a positive theorem rather than a failure of the argument, and it is the first one a reader thinks of.

Capping is not a third kind of restriction but the conjunction of two already on the list: forcing $q$ to vanish outside the capped set breaks full support (Section~\ref{sec-model}), and forcing $p$ to do likewise restricts the distribution class (Section~\ref{sec-background}). What is instructive is that neither conjunct suffices alone. A model with zeros is defeated by a truth that emits what the model excluded, and a truth of bounded expected length is defeated by a model whose loss outruns the length; yet together they succeed completely. The conditions are thus not independent, and abandoning both at once accomplishes what abandoning either accomplishes not at all.

The cost is that no such $N$ is given by anything. Sequence lengths under the true distribution have no known maximum, and a cap is not a fact discovered about the data but a decision to discard its tail rather than model it---so the exit is available only to someone willing to declare in advance that texts beyond a certain length do not occur.

\subsection{Estimand: Estimate Something Else}\label{sec-estimand}

Of the seven conditions this is the one that looks easiest to abandon, and it is the most expensive, because the logarithmic loss is very nearly forced. Two constraints pin it down. \emph{Locality}: the loss at $x$ depends only on $q(x)$, not on how $q$ spreads its remaining mass over the sequences that did not occur. \emph{Strict propriety}: for each $p$, the map $q \mapsto \mathbb{E}_p[\ell(q,X)]$ is minimized uniquely at $q = p$. On a sample space with at least three points, these two conditions together determine $\ell$ up to a positive multiple and an additive constant, so that $\ell(q,x) = -c\log q(x) + b$ (Shuford et al., 1966); McCutcheon (2019) shows the differentiability hypothesis in that argument to be dispensable. For a survey of the wider family of proper scores, see Gneiting and Raftery (2007).

For autoregressive models locality is not a preference but a structural requirement. It is precisely what makes the sequence-level loss decompose into the sum of next-token losses displayed in Section~\ref{sec-setting}, and precisely what makes the loss computable from a single sampled path. So the trap can be stated sharply. A local loss that is bounded is flat on a tail of $[0,1]$ and hence not strictly proper; locality and strict propriety together force an unbounded loss; and Theorem~\ref{thm-first} says that an unbounded loss on an infinite $\X$ has no consistently estimable risk. Boundedness and propriety cannot be had together.

\paragraph{Two losses instead of one.}
What makes this more than an inconvenience is that the loss's two roles want opposite answers. In training, unboundedness is a virtue: the divergent penalty for assigning near-zero probability to observed data is what forces the model to cover the support of $p$. In evaluation, it is fatal. So one exit is to keep the logarithmic loss for training and adopt a bounded proper score---the Brier score (Brier, 1950) or the spherical score---as the test criterion.

This is not a hypothetical proposal; it is current practice wherever the outcome space is small. The Brier score is now a standard calibration metric for language models, reported on multiple-choice benchmarks alongside expected calibration error and held-out log-likelihood, and it is the headline metric of the LLM-forecasting literature, where systems are ranked by Brier score on binary questions about future events (Karger et al., 2025). The theoretical case is no better. Propriety is at bottom an incentive property, designed for a setting in which an agent reports a distribution and might misreport it; at test time the model is fixed and cannot game the score, so it is not obvious that the test criterion must be strictly proper at all. A bounded loss is uniformly consistent by Hoeffding's inequality, infinite $\X$ notwithstanding, so the impossibility simply does not arise.

Nor is such a score beyond estimation on $\X$, though seeing this requires unpacking it a little. Where the logarithmic loss looks only at the probability $q(x)$ assigned to the sequence that actually occurred, the Brier loss compares the model's entire probability vector against the indicator vector of the realized outcome, coordinate by coordinate, and sums the squared differences:
\[
\ell_B(q,x) := \sum_{y \in \X} \big(q(y) - \mathbf{1}\{y = x\}\big)^2 .
\]
This is nonlocal by construction---the sequences that did not occur contribute to the score---and it is bounded, taking values in $[0,2]$ however small $q(x)$ may be. Multiplying out the square and separating the one coordinate at which the indicator is $1$ gives
\[
\ell_B(q,x) = \underbrace{\sum_{y} q(y)^2}_{\textstyle =:\, \|q\|_2^2} \;-\; 2q(x) \;+\; 1 ,
\]
and hence, taking expectations under $p$, a risk with two parts:
\[
R^B_p(q) = \|q\|_2^2 + 1 - 2\,\mathbb{E}_p\big[q(X)\big] .
\]
Each part is accessible. The second involves $p$ but only through the expectation of the $[0,1]$-valued quantity $q(X)$, so the holdout average of $q(Y^{(j)})$ estimates it at Hoeffding rates, uniformly over all $p$. The first involves no $p$ at all: since $\|q\|_2^2 = \mathbb{E}_{X \sim q}[q(X)]$, it is a constant attaching to the model alone and can be approximated to any desired accuracy by sampling sequences from the model itself. Neither part requires summing over $\X$.

Yet nobody reports a sequence-level Brier score for a language model, and the reason is instructive. It is visible in the two parts just isolated. Probability mass on $\X$ is spread over astronomically many sequences, so $q(x)$ is minuscule for each individual sequence; this drives $\mathbb{E}_p[q(X)]$ to nearly zero, and it drives $\|q\|_2^2 = \sum_y q(y)^2$ to nearly zero as well, since squaring already-tiny numbers makes them negligible. Both terms vanish, so the Brier risk of every model lies within a vanishing distance of $1$---the midpoint of its range. The score is estimable precisely because it has compressed the entire range of model quality into an interval where nothing is distinguishable: what boundedness buys in estimability it pays for in resolution. And the boundary between the cases where bounded scores are used and the cases where they are not tracks this exactly. They are used where the outcome space has two or four elements and differences between models run to a tenth; they are abandoned where the outcome space is $\X$.

So the route is real but narrower than it first appears. It does not license swapping held-out perplexity for a bounded score on the same sample space. It directs attention instead to a different question: whether the small-outcome-space evaluations on which bounded scores do discriminate can be made to carry the weight that held-out cross entropy currently carries. That is an open empirical question rather than a foreclosed one. Beyond resolution, one also forfeits the chain rule and with it per-token diagnostics, the reading of the number as bits and hence comparability with perplexity and with the scaling-law literature, and---least noticed---the license to treat the holdout figure as an estimate of the objective that was actually minimized. That last is the usual rationale for cross-validation, and abandoning it leaves a gap that only a calibration theorem could close: some guarantee that driving the logarithmic risk down drives the bounded test criterion down. No such theorem is available for language models.

\paragraph{Estimating the divergence instead.}
A second candidate estimand is the one a reader is most likely to propose, because it repairs the interpretive complaint from which this paper began. Cross entropy is inflated by the entropy of the truth,
\[
D(P\|Q) \;=\; R(p,q) \;-\; H(P) ,
\]
so a perfect model scores badly whenever the data-generating distribution is itself unpredictable, whereas the Kullback--Leibler divergence vanishes exactly when $q = p$ and measures the model's shortfall as such. Why not evaluate models by $D(P\|Q)$?

Because it is not estimable by any procedure of the holdout kind. The decomposition above separates the two terms exactly along the line of what a sample makes available. The term $R(p,q)$ is an expectation of $-\log q(X)$, a quantity one can evaluate at the observed sequence, since $q$ is in hand; that is what the holdout average estimates. The term $H(P)$ is an expectation of $-\log p(X)$, and $\log p$ cannot be evaluated at anything: a sample reveals which sequence occurred, not what probability the truth assigned to it. The divergence is therefore not the expectation of any function of the data, and there is no plug-in for it in the sense in which the holdout average is a plug-in for $R(p,q)$. What one is left with is the problem of estimating the entropy of an unknown distribution on a countably infinite alphabet---precisely the problem studied by Antos and Kontoyiannis (2001), where the plug-in estimate is consistent over the class of finite-entropy distributions and no rate of convergence is available for any estimator without further assumptions. The exchange is thus a poor one: cross entropy is estimable in principle and merely uninterpretable in isolation, while the divergence is interpretable and not estimable at all, and it inherits the pathologies of entropy estimation on top of those already at issue.

What survives is comparison. For two models,
\[
D(P\|Q_1) - D(P\|Q_2) \;=\; R(p,q_1) - R(p,q_2) ,
\]
since the unobservable $H(P)$ is common to both and cancels. Differences of divergences are therefore estimable exactly when differences of sequence risks are, and model \emph{ranking}---which is what a leaderboard actually asks for---survives the objection that absolute cross entropy is uninterpretable. But the cancellation buys interpretability, not estimability. The difference is well-defined only when at least one of the two risks is finite; if both are infinite it is the undefined $\infty - \infty$, and by Theorem~\ref{thm-first} no procedure determines which case obtains. The impossibility is not evaded by ranking, only relocated to the question whether the ranking is well-posed.

\paragraph{Screening the estimand.}
The screened procedure $\rhohat^{\,r}$ of Section~\ref{sec-screened} is the remaining route. Two matters left aside there belong here. The first is why the change of fortune occurs at all. The reason is topological, and it also explains why $r$ must be finite. Since $R(\cdot\,,q)$ is lower semicontinuous in total variation (Lemma~\ref{lem-lsc}), and $m$ likewise, $\{p : \rho(p,q) \leq r\}$ is closed and its complement open, so the screened question splits $\Pcal$ into a locally closed pair---the condition under which a question is decidable in the limit (Genin and Kelly, 2017). The unscreened question has no such form: $H_0 = \bigcup_n \{p : R(p,q) \leq n\}$ is a countable increasing union of closed sets, and it is exactly this structure that the proof in Appendix~\ref{sec-relative} exploits. Consistency is thus available for every finite $L$ and for no $L = \infty$; the cost of this exit is a non-uniformity in $L$ that cannot be driven to zero.

The second is the setting of $r$, which is the residual difficulty. Here $\log|\mathcal{V}|$ is a model-free ceiling: a model at or above it is beaten by uniform guessing. Comparative anchors---the preceding checkpoint, a reference compressor---serve as well. But $r$ need not be principled to be an improvement, because every reported holdout figure already presupposes that the risk is finite. Screening converts a silent presupposition into a declared parameter.

\subsection{Estimation Error: Redefine How Error Is Measured}\label{sec-inaccuracy}

The measure of error $\Error$ of Section~\ref{sec-main} is what makes an estimate of a finite number infinitely wrong when the risk is in fact infinite, and it is the assumption a reader is most likely to resist. Suppose then that someone holds the error of an estimate to be real-valued always: that when the estimand is $\rho(p,q) = \infty$ and the estimate is a finite $x \in \mathbb{R}$, the error incurred is not $\infty$ but some finite $f(x)$, penalizing the estimate more heavily the further it strays but never beyond the reals.

The proposal cannot be carried out without changing the estimand, and seeing why shows what the objection really comes to. To speak of $f(x)$ as the \emph{error of an estimate} is to speak of the distance between the estimate and the quantity being estimated. So the objector needs an estimand $\rho'(p,q)$ with $|\rho'(p,q) - x| = f(x)$ for every $x \in \mathbb{R}$---and that requirement is satisfiable only if $f$ has the form $f(x) = |r' - x|$ for a single real number $r'$, which is then the estimand. In other words, to insist that the error stays finite just is to insist that there is, after all, a real number that the estimand is. The infinite case has not been given a gentler treatment; it has been legislated away by replacing the quantity under discussion.

That may well be the right thing to do. But it is the strategy of Section~\ref{sec-estimand} under another description, and it inherits that strategy's costs: whatever $\rho'(p,q)$ turns out to be, it is no longer the per-token risk, and the burden falls on showing that the new quantity is one anybody wanted to know.

\subsection{Distribution Class: Assume Finite Expected Length}\label{sec-background}

Restricting the data-generating distributions to those of finite expected length is the exit developed in Section~\ref{sec-floor}, where a bounded context window is shown to floor a model's next-token probabilities and the holdout estimator is shown to be consistent in consequence. Two matters left aside there are recorded here.

The first is what the exit costs, and it follows from the bound established in Section~\ref{sec-floor} without further work. For a model $q$ with a floor $\beta$, that bound gives $\rho(p,q) \leq \log(1/\beta)$ whenever $m(p) < \infty$; and $\rho(p,q)$ is not defined at all when $m(p) = \infty$. So for such a model,
\[
\rho(p,q) \ \text{exists and is finite} \iff m(p) < \infty .
\]
Assuming finite expected length therefore does not \emph{secure} the finiteness of the per-token risk by restricting the possibilities in some independently motivated way; given the floor, it is equivalent to that finiteness. The exit assumes precisely what was in question. The assumption is moreover untestable, by Proposition~\ref{prop-untestable}. The two observations together are why relocating the presupposition, rather than discharging it, is the most the exit achieves.

The second is where this sits in a wider literature, and an asymmetry it brings out. Antos and Kontoyiannis (2001) show that moment assumptions are what rescue the estimation of additive functionals on countable alphabets from their worst pathologies, and subsequent work has identified what such assumptions buy (Wyner and Foster, 2003; Cohen et al., 2021). What the displayed equivalence identifies is \emph{which} moment assumption the practice of held-out language-model evaluation is implicitly making: a first-moment assumption on the length distribution, not on the loss. Behind this lies a distinction worth stating in general form, since it explains why restricting the model class is a respectable move while restricting the class of data-generating distributions is not. Assumptions about $q$ are verifiable: the trained model is an object one possesses, and its window and its floor can be inspected. Assumptions about $p$ are never verifiable, and by the results of this paper some of them are not even testable in the limit. An exit that trades an assumption on $p$ for an assumption on $q$ is therefore a real gain; the exit under discussion, unhappily, needs both.

\subsection{Model Class: More Than Full Support}\label{sec-model}

The strategy of restricting the model class has been discussed in Section \ref{sec-floor}. 

\subsection{Topology: Use a Finer Topology}\label{sec-topology}

Denseness is a topological notion, so it is fair to ask whether the total variation topology is the right one, and whether some other choice would confine the inconsistencies of a good estimator to a set that is not dense. Before surveying candidates it is worth noticing that the question has a direction, and that the direction rules out most of them at a stroke.

Denseness is preserved under coarsening. If $\tau_1 \subseteq \tau_2$ are topologies on $\Pcal$, then every $\tau_2$-open set is $\tau_1$-open, so a set meeting every nonempty $\tau_2$-open set meets every nonempty $\tau_1$-open set: anything $\tau_2$-dense is $\tau_1$-dense. Coarsening thus makes dense inconsistency \emph{easier} to have, never harder. An escape route must therefore supply a topology strictly \emph{finer} than the total variation topology---one with more open sets, more small neighborhoods, and hence a more demanding standard for a set to be dense.

This disposes of the usual alternatives, which are all coarser or equal. The weak topology is coarser than the norm topology on any space of measures, so it cannot help; in the present case the question does not even arise, since on a countable discrete $\X$ weak convergence is convergence of mass functions coordinatewise, which by Scheffé's lemma implies convergence in total variation, and both topologies are metrizable on $\Pcal$, so having the same convergent sequences they are the same topology (this is the coincidence already relied on in the footnote to Lemma~\ref{lem-continuous}). Nor do the other familiar metrics give anything new. Writing $H^2(p,p') := \tfrac{1}{2}\sum_i (\sqrt{p_i} - \sqrt{p'_i})^2$ for the squared Hellinger distance, the standard comparison $H^2 \leq \dTV \leq \sqrt{2}\,H$ shows the two metrics to be topologically equivalent, so the Hellinger topology is the total variation topology under another name. And the total variation topology is in any case the one that answers to the informal idea in play throughout this paper---that moving a small amount of probability mass changes a distribution only slightly---which is the idea the impossibility exploits.

One candidate survives the direction argument. By Pinsker's inequality, $\dTV(p,p') \leq \sqrt{\tfrac{1}{2}D(p\|p')}$, so small Kullback--Leibler divergence implies small total variation distance but not conversely: divergence balls sit inside variation balls, and a topology generated by the divergence is strictly finer. That is not an accident of the inequality but reflects exactly the disagreement at issue. Moving a small amount of mass onto sequences to which $q$ assigns minuscule probability is cheap in total variation and expensive in divergence---and it is precisely that move which the construction behind Theorem~\ref{thm-first} makes, in the perturbation of Lemma~\ref{lem-h1-fin}. A divergence-based topology would be one that declines to regard tail perturbations as small.

The route can be assessed, and the assessment turns on a detail that is easy to pass over: which direction of the divergence generates the neighborhoods. Take the neighborhoods of $p$ to be the sets $\{p' : D(p'\|p) < \delta\}$. Then every neighbor of $p$ is absolutely continuous with respect to $p$, so if $p$ is finitely supported---and the finitely supported distributions are themselves dense---no neighbor of $p$ puts mass anywhere $p$ does not, every neighbor has finite risk, and the infinite-risk states are not dense. Denseness fails, and the exit works. Take the neighborhoods to be $\{p' : D(p\|p') < \delta\}$ instead, and the perturbation of Lemma~\ref{lem-h1-fin} survives: it only adds mass, so the ratio $p_k/p''_k$ stays below one off the single depleted coordinate, and the divergence there tends to zero with $\mu$. Denseness holds, and the exit fails.

So the topological exit is real in one direction and illusory in the other. Two things temper it even where it works. First, Theorems~\ref{thm-first} and~\ref{thm-second} make no reference to any topology, so no choice of topology touches them: what is at stake is only the strengthening in Theorem~\ref{thm-third}. Second, pricing tail mass this way is very nearly the restricted-class exit of Section~\ref{sec-background} under another description---to declare that a distribution with a heavier tail is infinitely far away is close to declaring that it was never a serious possibility---and it makes the basic neighborhoods unrecognizable from data, since the divergence is not the expectation of any observable quantity.

There is, finally, a limiting case that shows where the route ends. The discrete topology on $\Pcal$ is finer than every other, and in it no proper subset is dense at all; the impossibility evaporates. It evaporates because in the discrete topology no two distinct distributions are close, so nothing counts as a small perturbation, and the notion the result is about has been abolished rather than refined. Any proposal in this direction must say what stops it short of that.

\end{document}